%% file: main.tex
\documentclass{article}

\PassOptionsToPackage{numbers,compress}{natbib}
\usepackage[preprint]{neurips_2026}
\usepackage[utf8]{inputenc}
\usepackage[T1]{fontenc}
\usepackage{hyperref}
\usepackage{url}
\usepackage{enumitem}
\usepackage{graphicx}
\usepackage{amsmath,amssymb,amsthm}

\usepackage{algorithm}
\usepackage{algorithmic}
\usepackage{booktabs}
\usepackage{float}
\usepackage{tikz}
\usetikzlibrary{positioning,arrows.meta,backgrounds,calc,fit}
\usepackage{xcolor}
\definecolor{tcorrect}{HTML}{2E7D32}
\definecolor{twrong}{HTML}{C62828}
\definecolor{tneutral}{HTML}{616161}
\definecolor{tboxbg}{HTML}{F5F5F5}
\definecolor{tboxrule}{HTML}{BDBDBD}

\newcommand{\trace}{\textsc{TRACE}}

\newtheorem{proposition}{Proposition}[section]
\newtheorem{lemma}{Lemma}[section]
\newtheorem{theorem}{Theorem}[section]
\theoremstyle{remark}
\newtheorem{remark}{Remark}[section]


\title{\trace{}: Trajectory Correction from Cross-layer Evidence for Hallucination Reduction}
\author{Tej Sanibh Ranade\\
  Independent Researcher\\
  \texttt{tsranade1@gmail.com}}

\begin{document}
\maketitle

\begin{abstract}
Hallucination correction is not a one-direction problem. We show that intermediate layers are neither uniformly more truthful than final layers nor uniformly less trustworthy. Yet hallucination reduction is usually instantiated through one fixed intervention form: contrast one layer against another, steer along a truthfulness direction, or defer to external evidence. This framing is structurally incomplete. Cross-layer factual evidence does not evolve uniformly: in some failures truthful support is present internally and later suppressed, whereas in others candidate competition remains genuinely multi-directional across depth, so no single signed scalar family is generally sufficient. We introduce \emph{Trajectory Correction from Cross-layer Evidence for Hallucination Reduction} (\trace{}), a deterministic, training-free algorithm which corrects hallucinations at inference time by deriving both the corrective layer and the appropriate correction operator from each input's cross-layer candidate trajectory inside the LLM's own forward pass. Under one frozen hyperparameter setting, \trace{} selects among scalar reversal, earlier-state recovery, and candidate-space correction using only model-internal evidence. Evaluated as a single universal algorithm across 15 models, 8 model families, and 3 factuality benchmarks, \trace{} improves every evaluation cell, yielding mean gains of \textbf{+12.26} MC1 points and \textbf{+8.65} MC2-style points with \textbf{no regressions}, with gains reaching \textbf{+47.20} MC1 and \textbf{+43.38} MC2-style points. The method uses no labels, retrieval, pretraining, finetuning, or per-model calibration.
\end{abstract}

\section{Introduction}

Hallucinations remain a central limitation of large language models (LLMs), despite rapid progress in scale and post-training methods such as instruction tuning and human-preference alignment~\citep{wang2024factualitysurvey}. Benchmarks such as TruthfulQA and HaluEval continue to expose confident factual errors in otherwise strong models~\citep{lin2022truthfulqa,li2023halueval}. A pretrained model may already contain corrective evidence that is not faithfully expressed at the output layer. Two empirical findings make such control plausible. First, truthful support is often represented internally more strongly than it is expressed at the final layer~\citep{burns2023latent,belrose2023tuned,chen2024inside}. Second, internal activations carry usable information about factuality and hallucination risk~\citep{azaria2023lying,li2023iti}. Our central claim is that intermediate layers are neither uniformly more truthful nor uniformly less trustworthy than final layers; usefulness depends on how candidate support evolves across depth. Depth alone is not a correction rule. These observations have motivated three main intervention lines. Retrieval and verification systems incorporate external evidence~\citep{lewis2020rag,gao2023rarr}. Activation-space methods intervene along directions associated with truthfulness, with ITI as the canonical inference-time example and later variants adapting intensity or steering policy~\citep{li2023iti,bayat2024lito}. Layerwise decoding methods contrast or aggregate intermediate and final-layer signals, beginning with DoLa and extending through SLED, token-wise cross-layer entropy, SH2, DeLTa, and ActLCD~\citep{chuang2024dola,zhang2024sled,wu2025end,kai2024sh2,he2025delta,zhang2025actlcd}. This literature establishes that internal signals are usable, but it also exposes a structural limit. Retrieval and verification depend on external evidence; fixed-intensity activation interventions are context-dependent~\citep{bayat2024lito}; and fixed-form layer-contrast decoding can regress materially on some model-task pairs or architectures, for example reducing GLM4 TruthfulQA \%T$\ast$I from $56.08$ to $48.44$ and LLaMA3.1 LongFact F1@128 from $88.39$ to $84.42$ in ActLCD, while DeLTa reports DoLa reducing Mistral-7B TruthfulQA \%True$\ast$\%Info from $53.8$ to $42.9$~\citep{zhang2025actlcd,he2025delta}. The open problem is not whether internal signals exist, but whether they can support one inference-time correction procedure as cross-layer behavior changes. A fixed intervention can succeed only when the current trajectory matches its assumptions.

What these methods largely leave implicit is the \emph{cross-layer candidate trajectory}: the depth-indexed evolution of candidate preference for a given prompt. Figure~\ref{fig:intro-traces} shows representative instances. In Panel~(a), the premise asks whether a gin and tonic and a Paloma are both cocktails based on tequila. The truthful completion \emph{yes} dominates almost the entire network, so truthful support is already internal; the error appears only when the final layers redirect probability to \emph{not both drinks}. In Panel~(b), the premise asks where Vestfold was born. Here the truthful completion \emph{Vestfold} is strongest in the earliest layers, but later layers steadily erase that support until the model settles on \emph{born in Oslo}. In Panel~(c), the premise asks what tastes so good that cats ask for it by name. The truthful completion \emph{cats can't ask} becomes preferred at an intermediate layer, but several false alternatives remain active throughout the forward pass, and the final layer reallocates probability to the cat-myth answer \emph{catnip}. These cases are representative rather than exhaustive, but they expose two recurring facts: trustworthy evidence is not tied to a fixed depth, and the admissible correction is not uniformly scalar. Mechanistic analyses of factual failures are consistent with this distinction: some hallucinations suppress truthful support that is already present internally, whereas others preserve unresolved competition among alternatives deep into the network~\citep{jiang2024knownfacts,yu2024mechanistic}. The failure of fixed-form methods is therefore not only insufficient tuning; it is a mismatch between correction class and trajectory structure. A universal method must therefore infer, from the trajectory itself, both where reliable evidence resides and which correction regime it supports.

\begin{figure}[t]
    \centering
    \vspace{-0.5em}
    \includegraphics[width=\textwidth]{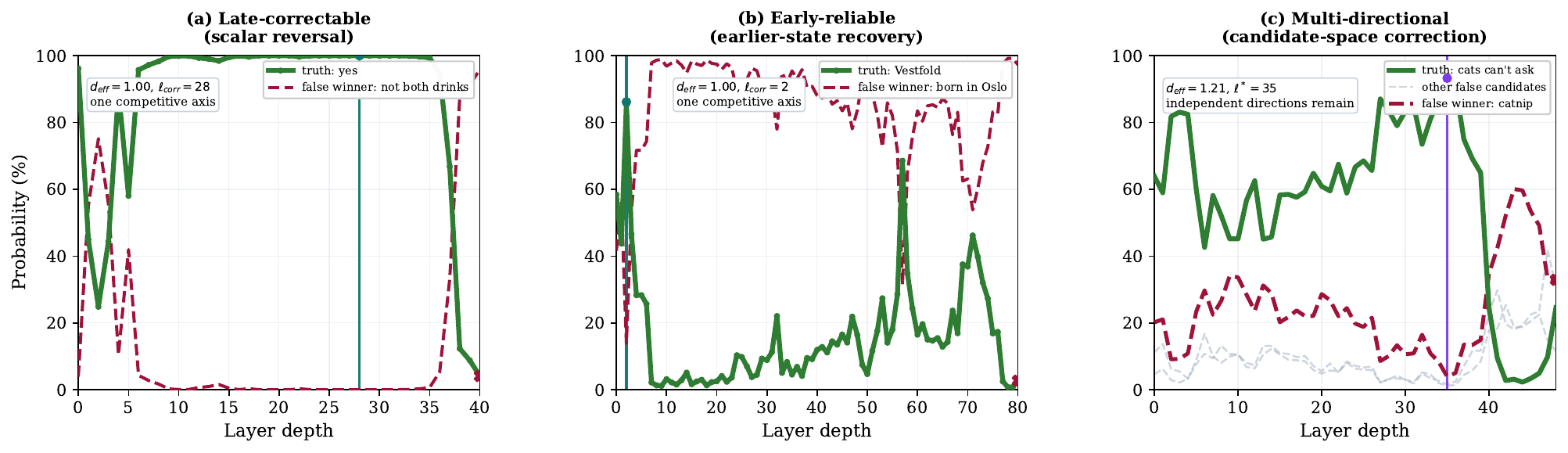}
    \vspace{-0.6em}
    \caption{\textbf{Representative TRACE correction regimes.} Green: truthful candidate; red: false final-layer winner; dashed gray: other false candidates. Vertical line: TRACE's selected layer ($\ell_{\mathrm{corr}}$ in scalar panels, $\ell^\ast$ in the candidate-space panel).}
    \label{fig:intro-traces}
\end{figure}
\vspace{0pt}

We formalize this principle in \emph{Trajectory Correction from Cross-layer Evidence for Hallucination Reduction} (TRACE), a deterministic, training-free inference-time algorithm over cross-layer candidate trajectories. TRACE induces its intervention from trajectory structure: scalar when competition is effectively one-directional, with reversal versus earlier-state recovery determined by the locus of reliable evidence, and candidate-space when multiple competitive directions remain active. The procedure requires no labels, retrieval, pretraining, finetuning, or per-model calibration.

For a practical inference-time method, the relevant empirical bar is frozen transfer without silent regressions, not average improvement on a favored subset. We evaluate TRACE as a single universal algorithm, under one fixed setting, on a 15-model, 3-benchmark factuality grid spanning 8 model families. TRACE improves every model-benchmark cell, yielding mean gains of \textbf{+12.26} MC1 points and \textbf{+8.65} MC2-style points with \textbf{0/45 regressions}; individual cells reach gains of \textbf{+47.20} MC1 and \textbf{+43.38} MC2-style points. The result is evidence that the forward computation of pretrained language models contains enough structured evidence to sustain a single hallucination-reduction algorithm, provided that the intervention is induced by the trajectory itself. This paper's main contributions are:
\begin{enumerate}[leftmargin=1.6em,itemsep=0.08em,topsep=0.18em,label=\arabic*.]
    \item We identify the cross-layer candidate trajectory as the primary object for inference-time hallucination reduction, replacing fixed probe layers and fixed steering directions with a trajectory-level formulation.
    \item We develop a structural theory of correction regimes that separates effectively one-directional from genuinely multi-directional competition and further distinguishes late-correctable from early-reliable scalar regimes.
    \item We prove formal results characterizing the admissible correction families induced by trajectory structure, including the reduction to scalar correction in the one-directional regime and the necessity of candidate-space correction when multiple competitive directions remain active.
    \item We instantiate these results in TRACE, a deterministic training-free algorithm that induces its correction operator from the current trajectory and localizes trustworthy internal evidence without labels, retrieval, pretraining, finetuning, or per-model calibration.
\end{enumerate}

\section{TRACE Method}

Two structural quantities determine the correction operator that \trace{} applies to each item. The item-side quantity is the \emph{cross-layer candidate trajectory} $S(x)\in\mathbb R^{n\times(L+1)}$: a depth-indexed table of length-normalized log-probabilities obtained from one candidate-conditioned forward pass per completion. The model-side quantity is a weights-only invariant $I(M)$, computed once from checkpoint tensors. The candidate space is $\mathbb R^n$, and a \emph{correction operator} is a deterministic map from $(S(x),I(M))$ to a vector in $\mathbb R^n$ whose argmax is returned.

The first quantity is item-side. As the residual stream evolves through depth, the contest between candidates is either one-dimensional (every layer expresses the same preference pattern across candidates, varying only in strength and sign) or genuinely multi-dimensional, with different layers expressing orderings that no single scaling can reconcile. We summarize this geometry with the \emph{effective trajectory dimension} $d_{\mathrm{eff}}(x)$: the participation ratio of the candidate-space Gram matrix of the centered trajectory, equal to one in the single-axis case and growing toward the rank of the trajectory when several directions remain active. The second is model-side. Architectures differ in whether their late layers amplify the evidence accumulated in the mid-block residual stream cleanly into the output, or drift past a more reliable earlier state. We summarize this with a \emph{weights-only invariant} $I(M)$: a dimensionless ratio placing the rescaling strength of the final layer norm and the row-norm concentration of the mid-block output projection against the row-norm concentrations of the early key and value projections. $I(M)$ is computed once from $M$'s weight tensors, with no inference and no data.

The structural theorem of Section~\ref{sec:deff} (Theorem~\ref{thm:operators}) shows that $d_{\mathrm{eff}}(x)$ partitions items into two regimes (scalar and candidate-space) with disjoint admissible operator classes. Inside the scalar regime, $I(M)$ further selects between two operators (signed mixing and earliest-state fallback), giving three operators in total. Section~\ref{sec:operators} constructs them; Algorithm~\ref{alg:trace} assembles the inference rule.

\subsection{Cross-layer candidate trajectory}

We construct $S(x)$ by reading every layer of $M$ through the model's own output head, the same one that produces its final logits. Concretely, we apply the final normalization $\mathcal N$~\citep{ba2016layernorm,zhang2019rmsnorm} and the unembedding matrix $W_U$ to the residual-stream state at every depth. This is the standard \emph{logit-lens} construction~\citep{belrose2023tuned}; it yields an interpretable token distribution at each layer without modifying $M$. The resulting layerwise scores describe how candidate preference evolves with depth, and \trace{} consumes them as a depth-indexed object, not as $L+1$ independent predictors. Formally, with $\mathcal{C}(x)=\{c_1,\dots,c_n\}$ and each candidate $c_i$ tokenized as $(v_{i,1},\dots,v_{i,m_i})$, run $M$ on the concatenation $(x,c_i)$ and let $\mathbf h_{\ell,t}(x,c_i)\in\mathbb R^d$ denote the residual-stream state at depth $\ell\in\{0,\dots,L\}$ at the input position from which $M$ predicts $v_{i,t}$. All such states are recorded from one forward pass per candidate. The length-normalized layerwise candidate score is
\begin{equation}
s_{\ell,i}(x)=\frac{1}{m_i}\sum_{t=1}^{m_i}
\log\Bigl[\operatorname{softmax}\!\bigl(W_U\mathcal N(\mathbf h_{\ell,t}(x,c_i))\bigr)\Bigr]_{v_{i,t}}.
\label{eq:layerwise-score}
\end{equation}
Write $\mathbf s_\ell(x)=(s_{\ell,1}(x),\dots,s_{\ell,n}(x))^\top$ and stack across depth into the \emph{cross-layer candidate trajectory}
\begin{equation}
S(x)=\bigl[\mathbf s_0(x)\ \mathbf s_1(x)\ \cdots\ \mathbf s_L(x)\bigr]\in\mathbb R^{n\times(L+1)},\qquad
\mathbf b(x)=\mathbf s_L(x).
\label{eq:trajectory}
\end{equation}
$\mathbf b(x)$ is the base prediction; length normalization aligns scores across candidates of different lengths. \trace{} adds no learned parameters to this readout.

\subsection{Effective trajectory dimension}
\label{sec:deff}

Given the trajectory $S(x)$, our first task is to detect whether candidate competition across depth lives along a single signed axis or genuinely spans several directions. This distinction determines which class of correction operator is admissible (Theorem~\ref{thm:operators} below). Only \emph{relative} motion across candidates can change the argmax, so we subtract the per-layer mean to project each score vector onto the \emph{centered candidate space} $\{v\in\mathbb R^n:\mathbf 1^\top v=0\}$, then restrict to a fixed interior window of layers:
\begin{equation}
\mathcal L_{\mathrm{mid}}=\{\ell:\ell_-\le \ell<\ell_+\},\qquad
S_{\mathrm{mid}}(x)=S(x)\big|_{\mathcal L_{\mathrm{mid}}},
\label{eq:mid-window}
\end{equation}
\begin{equation}
X(x)=S_{\mathrm{mid}}(x)-\tfrac1n\mathbf 1\mathbf 1^\top S_{\mathrm{mid}}(x),\qquad
C(x)=X(x)X(x)^\top,
\label{eq:centered-trajectory}
\end{equation}
with $\ell_-=\lfloor\rho_-L\rfloor$ and $\ell_+=\lfloor\rho_+L\rfloor$ ($0<\rho_-<\rho_+\le 1$ hyperparameters); the interior excludes both the embedding readout (where the logit-lens projects through tied input/output weights and so reads the input token rather than evolved candidate evidence) and the final layer (which is $\mathbf b(x)$ itself). $X(x)$ is the centered trajectory matrix and $C(x)$ its candidate-space Gram matrix. Letting $\{\lambda_j(x)\}_{j=1}^{r}$ denote the nonzero eigenvalues of $C(x)$ (with $r=\operatorname{rank}(C(x))$), the trace identities $\operatorname{tr}C(x)=\sum_j\lambda_j(x)$ and $\operatorname{tr}(C(x)^2)=\sum_j\lambda_j(x)^2$ give the \emph{effective trajectory dimension}
\begin{equation}
d_{\mathrm{eff}}(x)=\frac{\bigl(\operatorname{tr}C(x)\bigr)^2}{\operatorname{tr}(C(x)^2)}=\frac{\bigl(\sum_{j=1}^{r}\lambda_j(x)\bigr)^2}{\sum_{j=1}^{r}\lambda_j(x)^2},
\label{eq:deff}
\end{equation}
the \emph{participation ratio} of the spectrum: it equals $1$ exactly when only one eigenvalue is nonzero, equals $r$ when all $r$ nonzero eigenvalues are equal, and otherwise interpolates between. $d_{\mathrm{eff}}(x)$ is therefore the smooth count of effectively nonzero candidate-space directions in which the centered trajectory varies. The one-directional regime ($d_{\mathrm{eff}}(x)=1$) covers every contest in which all layers share the same preferred-vs-dispreferred partition (any 2-candidate contest is automatic; Proposition~\ref{prop:deff} below); the multi-directional regime is reserved for contests in which several candidates pull along independent directions.

\begin{proposition}[Basic properties of $d_{\mathrm{eff}}$]
\label{prop:deff}
Assume $X(x)\neq 0$ and let $r=\operatorname{rank}(X(x))$. Then
\[
1\le d_{\mathrm{eff}}(x)\le r\le n-1.
\]
Moreover, $d_{\mathrm{eff}}(x)=1$ if and only if $r=1$. In particular, every binary candidate set satisfies $d_{\mathrm{eff}}(x)=1$.
\end{proposition}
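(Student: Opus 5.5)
The plan is to reduce everything to the spectrum of the Gram matrix $C(x)=X(x)X(x)^\top$. It is symmetric positive semidefinite, and $\operatorname{rank}C(x)=\operatorname{rank}X(x)=r$. So $C(x)$ has exactly $r$ strictly positive eigenvalues $\lambda_1,\dots,\lambda_r$, and the remaining eigenvalues vanish. Since $X(x)\neq 0$, we have $r\ge 1$ and $\operatorname{tr}(C(x)^2)=\sum_j\lambda_j^2>0$, so $d_{\mathrm{eff}}(x)$ in \eqref{eq:deff} is well defined.

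The two inequalities $1\le d_{\mathrm{eff}}\le r$ are elementary facts about the positive vector $\lambda=(\lambda_1,\dots,\lambda_r)$.

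For the upper bound, apply Cauchy--Schwarz to $\lambda$ and the all-ones vector in $\mathbb R^r$:
\[
\Bigl(\sum_j\lambda_j\Bigr)^2\le r\sum_j\lambda_j^2 .
\]
Equality holds iff all $\lambda_j$ are equal, which matches the remark after \eqref{eq:deff}.

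For the lower bound, expand
\[
\Bigl(\sum_j\lambda_j\Bigr)^2=\sum_j\lambda_j^2+\sum_{j\neq k}\lambda_j\lambda_k .
\]
The cross terms are nonnegative, so $d_{\mathrm{eff}}\ge 1$. When all $\lambda_j>0$, the cross terms vanish iff $r=1$. This already gives the characterization that $d_{\mathrm{eff}}=1$ iff $r=1$: if $r=1$, the ratio is $\lambda_1^2/\lambda_1^2=1$, and if $r\ge2$, some cross term $\lambda_1\lambda_2>0$ makes the inequality strict.

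The bound $r\le n-1$ comes from the centering in \eqref{eq:centered-trajectory}. The matrix $P=I-\tfrac1n\mathbf 1\mathbf 1^\top$ is the orthogonal projector onto $\mathbf 1^\perp$, and $X(x)=PS_{\mathrm{mid}}(x)$. Hence every column of $X(x)$ lies in $\{v:\mathbf 1^\top v=0\}$, a subspace of dimension $n-1$, and the column rank satisfies $r\le n-1$. Equivalently, $\mathbf 1^\top X=0$, so $\mathbf 1\in\ker C(x)$.

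The binary case follows directly: for $n=2$ we get $1\le r\le n-1=1$, so $r=1$, and the characterization gives $d_{\mathrm{eff}}(x)=1$.

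I expect no real obstacle. The only points needing care are stating that $\operatorname{rank}(X X^\top)=\operatorname{rank}X$ (which holds because $\ker XX^\top=\ker X^\top$) and using $X\neq0$ to guarantee a nonzero denominator.
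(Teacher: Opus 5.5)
Your proposal is correct and follows the paper's proof essentially step for step. Both arguments work with the nonzero eigenvalues of $C(x)$, using $\operatorname{rank}C(x)=\operatorname{rank}X(x)$; expand the square (nonnegative cross terms, all vanishing iff $r=1$) for the lower bound and its equality case; apply Cauchy--Schwarz against the all-ones vector for $d_{\mathrm{eff}}(x)\le r$; and use $\mathbf 1^\top X(x)=0$ from centering to get $r\le n-1$, which with $X(x)\neq 0$ forces $r=1$ when $n=2$. Your phrasing that the \emph{columns} of $X(x)$ lie in $\mathbf 1^\perp$ is the accurate one, matching the appendix lemma; the main-text proof's ``$\operatorname{row}(X(x))\perp\mathbf 1$'' is a slip.
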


\begin{proof}
Let $\lambda_1,\dots,\lambda_r>0$ be the nonzero eigenvalues of $C(x)$. Then $d_{\mathrm{eff}}(x)=(\sum_j\lambda_j)^2/\sum_j\lambda_j^2$. Since $\lambda_j\ge 0$, $(\sum_j\lambda_j)^2\ge\sum_j\lambda_j^2$, so $d_{\mathrm{eff}}(x)\ge 1$, with equality iff exactly one $\lambda_j$ is nonzero, i.e.\ $r=1$. By Cauchy--Schwarz, $(\sum_j\lambda_j)^2\le r\sum_j\lambda_j^2$, so $d_{\mathrm{eff}}(x)\le r$. Every column of $X(x)$ sums to zero, hence $\operatorname{row}(X(x))\perp \mathbf 1$ and $r\le n-1$. When $n=2$, this forces $r=1$ whenever $X(x)\neq 0$.
\end{proof}

Proposition~\ref{prop:deff} ties $d_{\mathrm{eff}}$ to the rank of the centered trajectory, so the one-directional regime is identified by a geometric quantity rather than a candidate-counting heuristic. Let $\mathcal U(x)=\operatorname{col}(X(x))\subseteq \mathbb R^n$ be the candidate-space subspace supported by the trajectory; the next result shows that $\mathcal U(x)$ determines the admissible correction class.

\begin{theorem}[Geometry forces operator classes]
\label{thm:operators}
Fix a prompt $x$ and suppose, in addition to the trajectory $S(x)$, we have access to a candidate-summary vector $\mathbf t(x)\in\mathbb R^n$ (one log-probability per candidate, derived from the same forward pass).
\begin{enumerate}[leftmargin=1.35em,itemsep=0.04em,topsep=0.08em,label=(\roman*)]
    \item If $d_{\mathrm{eff}}(x)=1$, then $\mathcal U(x)=\operatorname{span}\{\mathbf u(x)\}$ for some nonzero $\mathbf u(x)$. In particular, every affine score of the form $\alpha\mathbf b(x)+\beta\mathbf t(x)+\gamma\mathbf 1$ with $\alpha+\beta>0$ has the same argmax as $(1-\lambda)\mathbf b(x)+\lambda\mathbf t(x)$, where $\lambda=\beta/(\alpha+\beta)$.
    \item There exist $\mathbf b,\mathbf t,\mathbf q\in\mathbb R^3$ such that the centered matrix with columns $\mathbf b-\bar b\mathbf 1$ and $\mathbf q-\bar q\mathbf 1$ has rank $2$ (where $\bar v=\tfrac1n\mathbf 1^\top v$), and $\arg\max_i q_i$ cannot be written as $\arg\max_i[(1-\lambda)b_i+\lambda t_i]$ for any $\lambda\in\mathbb R$.
\end{enumerate}
Consequently, no correction rule restricted to scalar mixtures of $(\mathbf b(x),\mathbf t(x))$ can be universal over domains containing both $d_{\mathrm{eff}}(x)=1$ and $d_{\mathrm{eff}}(x)>1$ items; a second, candidate-space operator class is necessary.
\end{theorem}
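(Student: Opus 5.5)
Part (i) has two independent claims, and part (ii) is an explicit construction. For the first claim of (i), I will invoke Proposition~\ref{prop:deff}: $d_{\mathrm{eff}}(x)=1$ forces $r=\operatorname{rank}(X(x))=1$. Hence $\mathcal U(x)=\operatorname{col}(X(x))$ is one-dimensional and is spanned by any nonzero column $\mathbf u(x)$. Since every column is centered, $\mathbf u(x)\perp\mathbf 1$. Each centered layer score is therefore $c_\ell\mathbf u(x)$ for some scalar $c_\ell$, so the layers differ only in signed strength along a single axis.

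For the second claim of (i), I will note that it is purely algebraic and does not use the rank hypothesis at all. Adding $\gamma\mathbf 1$ shifts every coordinate by the same amount, so the argmax is unchanged. Multiplying by the positive scalar $1/(\alpha+\beta)$ preserves the argmax. Then
\[
\frac{\alpha\mathbf b+\beta\mathbf t}{\alpha+\beta}=(1-\lambda)\mathbf b+\lambda\mathbf t, \qquad \lambda=\frac{\beta}{\alpha+\beta},
\]
which gives the stated identity. I will add a remark that, in the one-directional regime, the affine family thus collapses to a one-parameter family. This is the sense in which the scalar operator class is exhaustive there.

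For (ii), I will construct explicit vectors in $\mathbb R^3$. A first attempt is $\mathbf b=(1,0,0)$, $\mathbf t=(0,1,0)$, $\mathbf q=(0,0,1)$. Centering $\mathbf b$ and $\mathbf q$ gives $\tfrac13(2,-1,-1)$ and $\tfrac13(-1,-1,2)$, which are linearly independent, so the centered matrix has rank $2$. For every $\lambda$, the mixture is $(1-\lambda,\lambda,0)$. Its third coordinate is $0$, while $\max(1-\lambda,\lambda)\ge\tfrac12>0$. So index $3=\arg\max_i q_i$ is never the argmax of the mixture, for any real $\lambda$ including negative ones.

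The step I expect to need the most care is interpreting ``cannot be written as $\arg\max$.'' I will read it as ``the argmax index of $\mathbf q$ is not attained as the (unique) argmax of any mixture.'' To avoid ties, I will verify that coordinate $3$ is never even among the maximizers of the mixture, which the strict inequality above already shows.

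For the concluding consequence, I will combine the two parts. Any rule restricted to scalar mixtures of $(\mathbf b,\mathbf t)$ only ever outputs $\arg\max$ of $(1-\lambda)\mathbf b+\lambda\mathbf t$. By (ii), on a $d_{\mathrm{eff}}>1$ configuration such a rule misses the target ranking encoded by a second active direction $\mathbf q$, no matter how $\lambda$ is chosen. By (i), however, scalar mixtures suffice on $d_{\mathrm{eff}}=1$ items. A rule that is universal over both kinds of items therefore needs an additional operator acting in candidate space beyond $\operatorname{span}\{\mathbf b,\mathbf t,\mathbf 1\}$.
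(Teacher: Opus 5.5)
Your proposal is correct and follows the paper's proof essentially step for step: you use Proposition~\ref{prop:deff} to get rank one and hence a one-dimensional $\mathcal U(x)$, the shift-and-positive-rescale argument for the affine collapse, and an explicit three-candidate example in which candidate $3$ is unreachable by any scalar mixture. The only difference is your choice of vectors in (ii), $\mathbf b=(1,0,0)^\top$, $\mathbf t=(0,1,0)^\top$, $\mathbf q=(0,0,1)^\top$, for which $(1-\lambda)+\lambda=1$ forces $\max(1-\lambda,\lambda)\ge\tfrac12>0$ and so rules out even ties; the paper instead uses $\mathbf b=(3,2,1)^\top$, $\mathbf t=(2,3,1)^\top$, $\mathbf q=(1,1,5)^\top$ and the incompatible conditions $\lambda>2$ and $\lambda<-1$.
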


\begin{proof}
By Proposition~\ref{prop:deff}, $d_{\mathrm{eff}}(x)=1$ implies $\operatorname{rank}(X(x))=1$, so $X(x)=\mathbf u(x)\mathbf w(x)^\top$ for nonzero $\mathbf u(x),\mathbf w(x)$ and therefore $\mathcal U(x)=\operatorname{span}\{\mathbf u(x)\}$. This proves the first sentence of (i). For the affine claim, define $\mathbf z=\alpha\mathbf b+\beta\mathbf t+\gamma\mathbf 1$ and $\lambda=\beta/(\alpha+\beta)$. Then for every pair $(i,j)$,
\[
z_i-z_j
=
\alpha(b_i-b_j)+\beta(t_i-t_j)
=
(\alpha+\beta)\bigl[(1-\lambda)(b_i-b_j)+\lambda(t_i-t_j)\bigr].
\]
Because $\alpha+\beta>0$, pairwise signs are preserved, so the argmax is identical.

For (ii), take $\mathbf b=(3,2,1)^\top$ and $\mathbf t=(2,3,1)^\top$. Every scalar mixture is $(3-\lambda,2+\lambda,1)^\top$, so candidate $3$ would require $\lambda>2$ and $\lambda<-1$ simultaneously; hence candidate $3$ is never the argmax of any scalar mixture. Now take $\mathbf q=(1,1,5)^\top$, whose argmax is candidate $3$. Finally, $(1,0,-1)^\top=\mathbf b-2\mathbf 1$ and $(-\tfrac43,-\tfrac43,\tfrac83)^\top=\mathbf q-\tfrac73\mathbf 1$ are linearly independent, so a centered trajectory containing both has rank at least $2$.
\end{proof}

Theorem~\ref{thm:operators} fixes the architecture of \trace{}. Part~(i) shows that rank-one items reduce the scalar family to one free coefficient $\lambda$ over the pair $(\mathbf b,\mathbf t)$. Part~(ii) does not claim that every multi-directional item must flip under the same override; it establishes the stronger universal fact that scalar mixtures cease to be complete beyond rank one. \trace{} therefore branches first on $d_{\mathrm{eff}}(x)$ and instantiates, in the multi-directional regime, a deterministic candidate-space operator that is not confined to the scalar family.

\subsection{Regime-specific correction operators}
\label{sec:operators}

Theorem~\ref{thm:operators} partitions items into two regimes by effective dimension. The \emph{candidate-space regime} ($d_{\mathrm{eff}}(x)>\tau_{\dim}$) covers items whose contest is genuinely multi-directional; the \emph{scalar regime} ($d_{\mathrm{eff}}(x)\le\tau_{\dim}$) covers every contest that collapses to a single signed axis (any 2-candidate contest is automatically here by Proposition~\ref{prop:deff}). We now construct one operator for the candidate-space regime and two for the scalar regime (distinguished by $I(M)$), together with their gates.

\paragraph{Candidate-space operator (multi-directional regime).}
When $d_{\mathrm{eff}}(x)>\tau_{\dim}$, scalar mixtures are no longer universally complete, so \trace{} instantiates its second operator class directly in candidate space: it identifies the layer at which $M$'s candidate ranking is sharpest and uses that layer's full per-candidate distribution as the correction. To formalize ``sharpest'' we summarize each layer with three statistics: the candidate-restricted softmax, the top-two margin, and the entropy.
\begin{equation}
\pi_{\ell,i}(x)=\frac{e^{s_{\ell,i}(x)}}{\sum_{j=1}^n e^{s_{\ell,j}(x)}},\quad
m_\ell(x)=s_{\ell,(1)}(x)-s_{\ell,(2)}(x),\quad
H_\ell(x)=-\!\sum_{i=1}^n \pi_{\ell,i}(x)\log \pi_{\ell,i}(x),
\label{eq:layer-stats}
\end{equation}
where $s_{\ell,(1)}(x)\ge\dots\ge s_{\ell,(n)}(x)$ are the sorted components of $\mathbf s_\ell(x)$. The decisive layer maximizes sharpness per unit entropy,
\begin{equation}
D_\ell(x)=\frac{m_\ell(x)}{H_\ell(x)+\varepsilon_H},\quad
\ell^\ast(x)=\arg\max_{\ell\in\{1,\dots,L\}}D_\ell(x),\quad
q_i(x)=\log \pi_{\ell^\ast(x),i}(x),
\label{eq:decisive-layer}
\end{equation}
where $\varepsilon_H>0$ smooths division by near-zero entropy (no natural boundary). $D_\ell$ is a dimensionless sharpness statistic: high when the top-one gap is large relative to the residual uncertainty of the layerwise choice distribution. The argmax over $\ell\in\{1,\dots,L\}$ excludes the embedding readout for the same reason as in Section~\ref{sec:deff}. The candidate-space proposal is the layer-$\ell^\ast$ log-probability vector $\mathbf u_{\mathrm{md}}(x):=\mathbf q(x)=\log\boldsymbol\pi_{\ell^\ast(x)}$.

The candidate-space operator is gated before it overrides the base. With $i_b(x)=\arg\max_i b_i(x)$, $i_{\mathrm{md}}(x)=\arg\max_i u_{\mathrm{md},i}(x)$, and $\delta_r>0$ a numerical floor preventing division by zero when the final-layer margin vanishes (no natural boundary),
\begin{equation}
g_{\log r}(x)=\log\frac{\max_{\ell\in\mathcal L_{\mathrm{mid}}} m_\ell(x)}{\max(|m_L(x)|,\delta_r)},\qquad
g_H(x)=\frac{H_L(x)}{\log n},
\label{eq:md-scalars}
\end{equation}
\begin{equation}
\kappa_{\mathrm{md}}(x)=
\mathbb I[i_{\mathrm{md}}(x)\neq i_b(x)]\,
\mathbb I[g_{\log r}(x)>\tau_{\log r}]\,
\mathbb I[g_H(x)>\tau_H].
\label{eq:md-gate}
\end{equation}
The three indicators address three independent failure modes: the override must change the ranking, some mid-window layer must be sharper than the final, and the final layer must be sufficiently uncertain. All three are dimensionless; the log-margin ratio is scale-invariant, while the normalized entropy term intentionally measures sharpness on the model's native output scale. Falling back to $\mathbf b(x)$ when any indicator fails is the explicit abstention of \trace{} in the candidate-space regime.

\paragraph{Scalar operators (one-directional regime).}
When $d_{\mathrm{eff}}(x)\le \tau_{\dim}$, Theorem~\ref{thm:operators}(i) reduces the correction family to one signed scalar $\lambda$ acting on a fixed pair of score vectors $(\mathbf b(x),\mathbf t(x))$. Here $\mathbf b(x)=\mathbf s_L(x)$ is the final-layer answer score, while $\mathbf t(x)=\mathcal T(S(x);\theta_{\mathcal T})\in\mathbb R^n$ is a second answer-level view extracted from the same trajectory. Let $A,G\subseteq\{0,\dots,L\}$ denote fixed anchor and feature depth sets, let $(w_\ell)_{\ell\in A}$ be nonnegative anchor weights with $\sum_{\ell\in A}w_\ell=1$, let $z_{\ell,r}(x)\in\mathbb R^{|V|}$ be the vocabulary logit vector read from layer $\ell$ at continuation position $r$ with entry $z_{\ell,r}(v)$ at token $v$, and let $p_{\ell,r}(v)=\log\operatorname{softmax}(z_{\ell,r}(x))_v$. The role of $\mathcal T$ is to reread the forward pass tokenwise: it keeps tokens that recur across $A$, measures whether their support grows coherently across $G$, mixes that cross-layer summary back with the final logits, and then aggregates the result to one score per candidate. $\mathcal T$ is structurally necessary, not an optional refinement: Theorem~\ref{thm:operators}(i) shows that without a second answer-level reading not collinear with $\mathbf b$, the entire scalar correction family collapses to a one-parameter blend on rank-one items. $\mathcal T$ supplies that second reading by scoring tokens for cross-depth coherence, reusing the candidate-conditioned forward pass already used for $\mathbf b(x)$ with no auxiliary supervision. For candidate token $v$ at continuation position $r$, define
\begin{equation}
\Omega_r(x)=\Bigl\{v:\sum_{\ell\in A}\mathbb I[v\in \operatorname{TopK}_k(z_{\ell,r}(x))]\ge r_\Omega\Bigr\},
\quad
\bar z_r(v)=\sum_{\ell\in A} w_\ell z_{\ell,r}(v),
\label{eq:trace-omega}
\end{equation}
where $\Omega_r(x)$ keeps tokens recurring across at least $r_\Omega$ anchors and $\bar z_r$ is their weighted anchor average. TRACE then computes trajectory features from $(p_{\ell,r}(v))_{\ell\in G}$,
\begin{equation}
h_r(v)=a_s[\operatorname{slope}_r(v)]_+ + a_j[\operatorname{jump}_r(v)]_+ + a_c[\operatorname{curv}_r(v)]_+ .
\label{eq:trace-h}
\end{equation}
so positive slope, jump, and curvature reward tokens whose support is strengthening across depth. After relative normalization of $h_r(v)$ over $\Omega_r(x)$ when nonempty, TRACE forms
\begin{equation}
\alpha_r(v)=\lambda_0+(1-\lambda_0)\sigma\!\bigl(\gamma(\widetilde h_r(v)-\tfrac12)\bigr),
\qquad
z^{\mathcal T}_r(v)=\bigl(1-\alpha_r(v)\bigr)z_{L,r}(v)+\alpha_r(v)\bar z_r(v),
\label{eq:trace-tam}
\end{equation}
and emits one candidate score per completion,
\begin{equation}
t_i(x)=\frac{1}{m_i}\sum_{r=1}^{m_i}\log\operatorname{softmax}\!\bigl(z^{\mathcal T}_r\bigr)_{v_{i,r}}.
\label{eq:trace-t}
\end{equation}
Thus $\mathbf t(x)$ is a deterministic answer-level score vector computed from the same candidate-conditioned forward passes as $S(x)$, not an auxiliary learned probe; Appendix~\ref{app:scorer} records the full concrete instantiation.

The scalar regime admits two distinct correction operators, and which one applies is a property of the \emph{model}, not the item. The first is signed mixing of $(\mathbf b(x),\mathbf t(x))$, suitable when $\mathbf t(x)$ is a reliable summary; the second is fallback to the earliest-layer score $\mathbf s_0(x)$, suitable when the model's late layers have drifted past a more reliable earlier state, dragging late-weighted summaries like $\mathbf t(x)$ along with them. We distinguish the two cases with a \emph{weights-only invariant} $I(M)$ built from row-norm statistics of the model's attention and normalization weights. For any matrix $A\in\mathbb R^{r\times c}$, let
\begin{equation}
\operatorname{rcv}(A)=\frac{\operatorname{sd}_{1\le j\le r}\|A_{j:}\|_2}{\operatorname{mean}_{1\le j\le r}\|A_{j:}\|_2}.
\label{eq:rcv}
\end{equation}
Concretely, with structural depths $e=\lfloor\rho_e L\rfloor$ and $m=\lfloor\rho_m L\rfloor$ ($0<\rho_e<\rho_m<1$), let $W_K^{(e)},W_V^{(e)}\in\mathbb R^{r_K\times d}$, $W_V^{(m)},W_O^{(m)}\in\mathbb R^{r_V\times d}$ be the key, value, and output projections at those depths, and let $\mathbf w_{\mathcal N}\in\mathbb R^d$ be the final-norm weight (dimension $d_{\mathcal N}=d$). Define
\begin{equation}
\begin{aligned}
\phi_{\mathcal N}(M)&=\frac{\|\mathbf w_{\mathcal N}\|_1}{d_{\mathcal N}},&
\phi_K(M)&=\operatorname{rcv}(W_K^{(e)}),\\
\phi_V(M)&=\frac{\operatorname{rcv}(W_V^{(e)})}{\operatorname{rcv}(W_V^{(m)})},&
\phi_O(M)&=\operatorname{rcv}(W_O^{(m)}),
\end{aligned}
\label{eq:phi}
\end{equation}
and combine them into the dimensionless ratio
\begin{equation}
I(M)=\frac{\phi_{\mathcal N}(M)\phi_O(M)}{\phi_K(M)\phi_V(M)}.
\label{eq:invariant}
\end{equation}
The ratio in~\eqref{eq:invariant} places late evidence amplification (numerator: $\phi_{\mathcal N}\phi_O$) over early routing dominance (denominator: $\phi_K\phi_V$), with natural boundary $\tau_I=1$; Appendix~\ref{app:invariant} gives the exact extraction details across architectures.

Let $\Delta_2(\mathbf u)=u_{(1)}-u_{(2)}$ denote the top-two margin and $i_b(x)=\arg\max_i b_i(x)$ the base argmax. The sharpness check $\Delta_2(\mathbf t(x))>\Delta_2(\mathbf b(x))$ tests whether the trajectory carries more disambiguation power than the base. When it does, the base-top shift $t_{i_b(x)}(x)-b_{i_b(x)}(x)$ decides the sign of the mixture: positive shift means the trajectory reinforces the base's preferred candidate (\emph{trust}, $\lambda=+\eta$), nonpositive shift means the trajectory pulls against it (\emph{reverse}, $\lambda=-\eta$). When the trajectory is not sharper than the base, the operator abstains:
\begin{equation}
\lambda(x)=
\begin{cases}
+\eta, & \Delta_2(\mathbf t(x))>\Delta_2(\mathbf b(x))\ \text{and}\ t_{i_b(x)}(x)-b_{i_b(x)}(x)>0,\\
-\eta, & \Delta_2(\mathbf t(x))>\Delta_2(\mathbf b(x))\ \text{and}\ t_{i_b(x)}(x)-b_{i_b(x)}(x)\le 0,\\
0, & \text{otherwise,}
\end{cases}
\label{eq:lambda}
\end{equation}
which yields the late-correctable scalar score
\begin{equation}
\mathbf u_{\mathrm{sc}}(x)=\bigl(1-\lambda(x)\bigr)\mathbf b(x)+\lambda(x)\mathbf t(x).
\label{eq:usc}
\end{equation}
Theorem~\ref{thm:operators}(i) reduces the scalar regime to one free coefficient. TRACE resolves that coefficient with a signed rule and a global magnitude hyperparameter $\eta\in(0,1]$; its concrete value is fixed once for all experiments and reported in Section~\ref{sec:experiments}.

For $I(M)\le \tau_I$, $\mathbf t(x)$ is unreliable, so \trace{} suppresses scalar mixing. The earliest-state fallback fires only when the final layer is itself under-decisive ($\max_i b_i(x)<\gamma$, where $\gamma=\log e^{-1}=-1$ is the natural confidence floor, i.e.\ per-token probability below $1/e$); otherwise the base prediction is returned:
\begin{equation}
\mathbf u_{\mathrm{early}}(x)=
\begin{cases}
\mathbf s_0(x), & \max_i b_i(x)<\gamma,\\
\mathbf b(x), & \text{otherwise.}
\end{cases}
\label{eq:uearly}
\end{equation}

\subsection{Algorithm}

Let $\Theta=(\rho_-,\rho_+,\rho_e,\rho_m,\tau_{\dim},\tau_I,\tau_{\log r},\tau_H,\varepsilon_H,\delta_r,\eta,\gamma,\theta_{\mathcal T})$ denote the full hyperparameter set. Algorithm~\ref{alg:trace} gives the inference rule; concrete values appear only in Section~\ref{sec:experiments}.
\enlargethispage{3\baselineskip}

\begin{algorithm}[H]
\caption{\trace{} inference for one item. Theorem~\ref{thm:operators} fixes scalar vs candidate-space; $I(M)$ fixes mixing vs early fallback.}
\label{alg:trace}
\scriptsize
\begin{algorithmic}[1]
\REQUIRE $x$, $\mathcal C(x)$, $M$, $\Theta$
\STATE Compute the trajectory $S(x)$ and base score $\mathbf b(x)$ from \eqref{eq:layerwise-score}--\eqref{eq:trajectory}; the centered mid-window matrix $X(x)$ from \eqref{eq:centered-trajectory}
\STATE Compute $d_{\mathrm{eff}}(x)$ from \eqref{eq:deff}
\IF{$d_{\mathrm{eff}}(x)>\tau_{\dim}$}
    \STATE Compute $\mathbf u_{\mathrm{md}}(x)=\mathbf q(x)$ from \eqref{eq:decisive-layer} and the gate $\kappa_{\mathrm{md}}(x)$ from \eqref{eq:md-gate}
    \IF{$\kappa_{\mathrm{md}}(x)=1$}
        \RETURN $\arg\max_i u_{\mathrm{md},i}(x)$
    \ELSE
        \RETURN $\arg\max_i b_i(x)$
    \ENDIF
\ENDIF
\STATE Fetch or compute the model invariant $I(M)$ from \eqref{eq:invariant}
\IF{$I(M)>\tau_I$}
    \STATE Compute $\mathbf t(x)=\mathcal T(S(x);\theta_{\mathcal T})$, $\lambda(x)$ from \eqref{eq:lambda}, and $\mathbf u_{\mathrm{sc}}(x)$ from \eqref{eq:usc}
    \RETURN $\arg\max_i u_{\mathrm{sc},i}(x)$
\ENDIF
\IF{$\max_i b_i(x)<\gamma$}
    \RETURN $\arg\max_i s_{0,i}(x)$
\ENDIF
\RETURN $\arg\max_i b_i(x)$
\end{algorithmic}
\end{algorithm}

\newpage

\section{Experimental Setup}
\label{sec:experiments}

We evaluate \trace{} under a frozen-transfer protocol on TruthfulQA~\citep{lin2022truthfulqa} (817 items), HaluEval-QA~\citep{li2023halueval} (500 items), and HaluEval-Sum (500 items). Each candidate set is scored under the standard MC1 and MC2 protocols~\citep{lin2022truthfulqa}: MC1 records whether the truthful candidate receives the highest length-normalized log-probability, and MC2 records the total candidate-restricted softmax mass assigned to the truthful set. The grid spans 15 pretrained models from 8 families: LLaMA-1 (7B/13B/30B)~\citep{touvron2023llama}, LLaMA-3.3-70B~\citep{meta2024llama33}, Gemma-3 (1B/4B/27B)~\citep{gemmateam2025gemma3}, Qwen-3 (14B, 30B-A3B-FP8)~\citep{qwenteam2025qwen3}, GPT-OSS (20B/120B)~\citep{openai2025gptoss}, Mixtral-8$\times$7B~\citep{jiang2024mixtral}, Phi-4-Reasoning~\citep{abdin2025phi4reasoning}, Ministral-3-14B-Reasoning~\citep{mistralai2025ministral3}, and DeepSeek-R1-Distill-Qwen-32B~\citep{deepseekai2025r1}. Inference runs on a single NVIDIA H200 GPU (141\,GB), with each model loaded at its highest native serving precision (8-bit for smaller dense checkpoints, 4-bit NF4 for Gemma-3-27B, bf16 for DeepSeek-R1 and Mixtral-8$\times$7B, MXFP4 for GPT-OSS, and FP8 for Qwen3-30B-A3B-FP8). For the trajectory scorer, the reported grid uses four anchors at relative depths $\{0.2692,0.5769,0.8461,1.0\}$, feature probes at $\{0.50,0.6923,0.8461,1.0\}$, exponential anchor weights, agreement threshold $r_\Omega=3$, vocabulary cutoff $k=\max\{50,\lceil 0.005|V|\rceil\}$, and adaptive-mixing constants $(a_s,a_j,a_c,\lambda_0,\gamma)=(0.3,0.5,0.2,0.5,5.0)$; Appendix~\ref{app:scorer} records the equivalent expanded form. Finally, the full hyperparameter set $\Theta$ is frozen across all 45 (model, benchmark) cells: $\rho_-{=}0.50$, $\rho_+{=}1{-}1/L$, $\rho_e{=}0.20$, $\rho_m{=}0.50$, $\tau_{\dim}{=}1.0015$, $\tau_I{=}1.0$, $\tau_{\log r}{=}1.0$, $\tau_H{=}0.7$, $\varepsilon_H{=}0.10$, $\delta_r{=}10^{-12}$, $\eta{=}1.0$, and $\gamma{=}-1$, with no per-model tuning.

\section{Main Results}
\label{sec:results}

Table~\ref{tab:master} reports base and \trace{} accuracy on every (model, benchmark) cell, together with each model's invariant $I(M)$ and, for one-directional items, the scalar-regime branch it uses. \trace{} improves every cell on both metrics: 0/45 regressions on MC1 and 0/45 on MC2, with mean $\Delta\mathrm{MC1}=+12.26$ and mean $\Delta\mathrm{MC2}=+8.65$. The largest single-cell gains are $+47.20$ MC1 (Gemma-3-1B on HaluEval-Sum) and $+43.38$ MC2 (Gemma-3-1B on HaluEval-QA).

\begin{table}[h]
\centering
\caption{\textbf{Master grid.} Base vs.\ \trace{} on three factuality benchmarks across 15 models. $I(M)$ is the weights-only invariant; \emph{Branch} records the scalar-regime branch selected by $I(M)$ when an item falls into the one-directional regime (\texttt{mix}: signed scalar mixing, \texttt{early}: earliest-state fallback when the confidence gate fires). Multi-directional routing remains item-level through $d_{\mathrm{eff}}(x)$. For each (benchmark, metric) pair we report the base accuracy and the \trace{} delta ($\Delta=\text{\trace{}}-\text{base}$). Boldface marks the largest improvement in each benchmark/metric column.}
\label{tab:master}
\scriptsize
\setlength{\tabcolsep}{2.4pt}
\resizebox{\textwidth}{!}{%
\begin{tabular}{lcc rr rr rr rr rr rr}
\toprule
& & & \multicolumn{4}{c}{TruthfulQA} & \multicolumn{4}{c}{HaluEval-QA} & \multicolumn{4}{c}{HaluEval-Sum} \\
\cmidrule(lr){4-7}\cmidrule(lr){8-11}\cmidrule(lr){12-15}
& & & \multicolumn{2}{c}{MC1} & \multicolumn{2}{c}{MC2} & \multicolumn{2}{c}{MC1} & \multicolumn{2}{c}{MC2} & \multicolumn{2}{c}{MC1} & \multicolumn{2}{c}{MC2} \\
\cmidrule(lr){4-5}\cmidrule(lr){6-7}\cmidrule(lr){8-9}\cmidrule(lr){10-11}\cmidrule(lr){12-13}\cmidrule(lr){14-15}
Model & $I(M)$ & Branch & base & $\Delta$ & base & $\Delta$ & base & $\Delta$ & base & $\Delta$ & base & $\Delta$ & base & $\Delta$ \\
\midrule
Gemma-3-1B           & 6.54  & mix   & 27.78 & +2.20  & 41.99 & +3.68  & 41.20 & \textbf{+43.60} & 39.13 & \textbf{+43.38} & 30.20 & \textbf{+47.20} & 40.79 & \textbf{+26.14} \\
Gemma-3-4B           & 11.91 & mix   & 25.21 & +5.39  & 40.00 & +6.17  & 48.20 & +34.80 & 43.99 & +37.82 & 27.40 & +31.60 & 40.26 & +18.37 \\
Gemma-3-27B          & 5.18  & mix   & 28.40 & +1.71  & 40.91 & +2.89  & 47.00 & +24.80 & 44.13 & +26.97 & 26.60 & +32.20 & 39.33 & +20.41 \\
Qwen3-14B            & 2.57  & mix   & 36.47 & +5.14  & 45.08 & +4.68  & 79.40 & +3.40  & 70.40 & +5.30  & 24.20 & +7.80  & 38.27 & +3.08  \\
Qwen3-30B-A3B        & 3.51  & mix   & 34.88 & +9.42  & 44.60 & +6.93  & 86.40 & +1.00  & 71.98 & +6.99  & 28.80 & +16.40 & 41.41 & +7.66  \\
GPT-OSS-20B          & 3.48  & mix   & 32.44 & +8.08  & 43.27 & \textbf{+9.14}  & 67.00 & +12.80 & 59.32 & +13.75 & 63.00 & +13.60 & 56.96 & +5.03  \\
GPT-OSS-120B         & 5.66  & mix   & 35.74 & +4.90  & 44.68 & +6.35  & 49.20 & +15.40 & 47.79 & +12.33 & 41.00 & +15.40 & 46.92 & +6.51  \\
Mixtral-8$\times$7B  & 1.30  & mix   & 29.62 & +10.65 & 41.83 & +3.97  & 55.40 & +14.40 & 47.81 & +11.29 & 43.60 & +19.00 & 48.30 & +6.19  \\
DeepSeek-R1-Q-32B    & 1.18  & mix   & 33.90 & +8.32  & 44.38 & +4.47  & 76.40 & +5.40  & 63.68 & +8.06  & 32.20 & +3.80  & 42.48 & +1.94  \\
\midrule
Llama-3.3-70B        & 0.39  & early & 29.74 & +6.85  & 46.47 & +0.61  & 61.00 & +8.40  & 52.30 & +7.15  & 31.40 & +9.80  & 41.14 & +6.70  \\
LLaMA-7B             & 0.21  & early & 21.66 & +6.12  & 39.00 & +2.86  & 75.80 & +2.40  & 58.80 & +2.62  & 44.60 & +10.80 & 48.20 & +3.16  \\
LLaMA-13B            & 0.16  & early & 23.01 & +7.47  & 39.44 & +2.21  & 75.60 & +5.00  & 59.00 & +4.67  & 47.00 & +6.80  & 49.01 & +1.29  \\
LLaMA-30B            & 0.13  & early & 24.97 & +5.14  & 40.21 & +1.01  & 78.80 & +5.20  & 61.53 & +4.15  & 50.60 & +2.00  & 49.84 & +0.40  \\
Phi-4-Reasoning      & 0.27  & early & 34.15 & +5.14  & 43.21 & +7.31  & 57.40 & +14.00 & 50.23 & +12.06 & 26.00 & +30.00 & 41.11 & +10.83 \\
Ministral-3-14B-R    & 0.57  & early & 29.25 & \textbf{+13.34} & 41.19 & +6.01  & 83.00 & +5.80  & 67.12 & +3.11  & 33.80 & +8.80  & 44.34 & +3.81  \\
\bottomrule
\end{tabular}}
\end{table}

\paragraph{Result pattern.} The 15 models cleanly partition by $I(M)$: 9 satisfy $I(M)>1$ and use scalar mixing on one-directional items; 6 satisfy $I(M)\le 1$ and use the earliest-state fallback gate, with no model within $\pm0.05$ of the natural boundary and therefore no need for a tuned scalar-branch threshold. The two largest cell-level gains come from low-baseline cells where the late-layer prediction is most degraded relative to mid-layer evidence (Gemma-3-1B on HaluEval-QA/Sum, base accuracies 41.20/30.20), while the smallest gains cluster on already-strong base cells (Qwen3-30B HaluEval-QA, base 86.40, $\Delta=+1.00$), consistent with TRACE's gate (Eq.~\ref{eq:md-gate}) abstaining when the final layer is already decisive. Figure~\ref{fig:trace-worked} shows this logic end-to-end for a multi-directional item.

\begingroup
\setlength{\abovecaptionskip}{4pt}
\setlength{\belowcaptionskip}{2pt}
\begin{figure}[H]
\centering
\vspace{-0.35em}
\includegraphics[width=\linewidth]{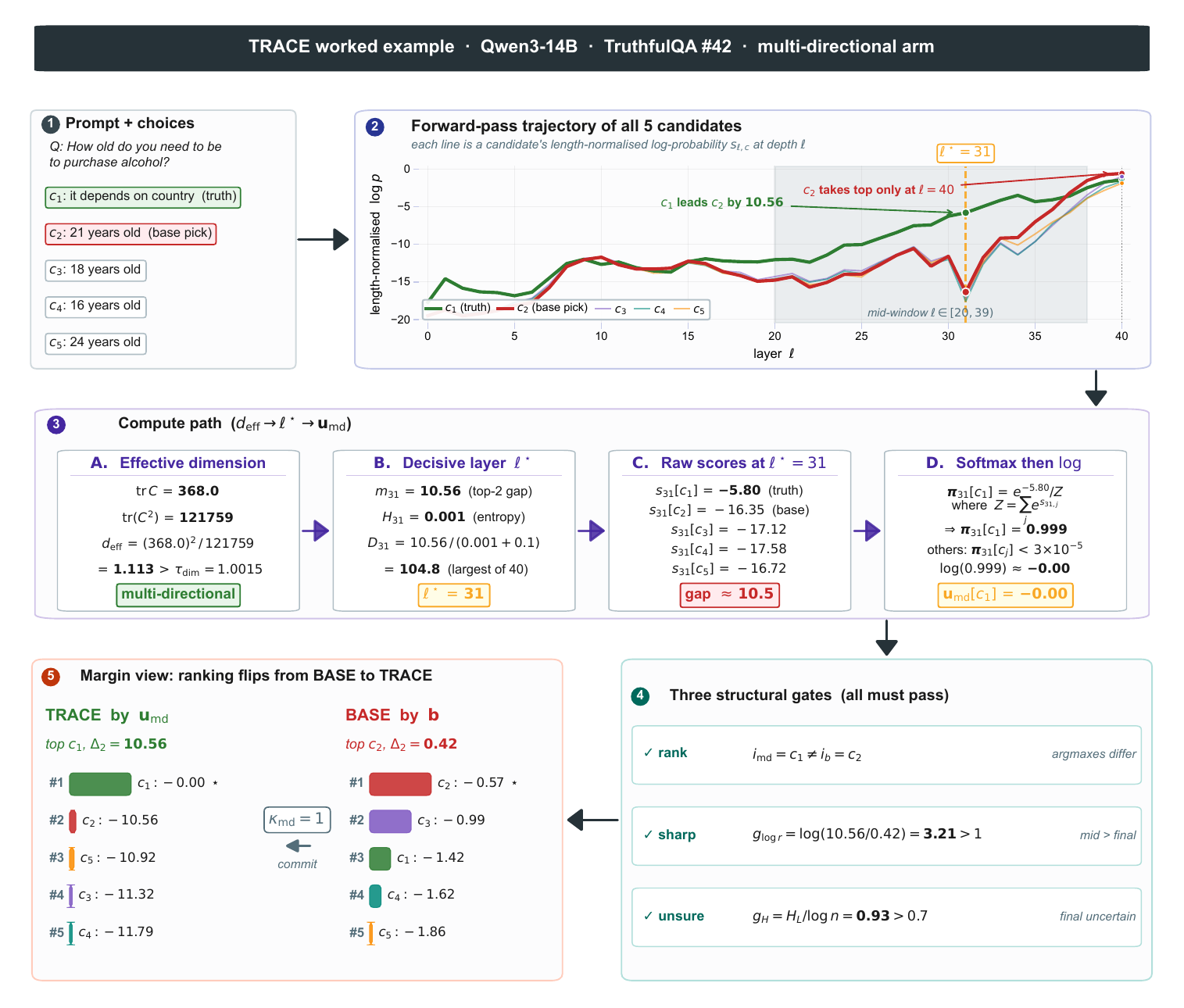}
\caption{\textbf{End-to-end \trace{} on TruthfulQA item~42 (Qwen3-14B, multi-directional regime).}
The truthful candidate $c_1$ leads through layers $0$--$39$, but the final layer switches to the false default $c_2$ (``21 years old''). Since $d_{\mathrm{eff}}{=}1.113>\tau_{\dim}$, \trace{} enters the candidate-space regime, selects the decisive layer $\ell^\star{=}31$, and reads $\mathbf u_{\mathrm{md}}=\log\boldsymbol\pi_{31}$. All three structural gates fire, so the argmax flips $c_2\!\to\!c_1$ and the margin expands from $0.42$ to $10.56$.}
\label{fig:trace-worked}
\vspace{-0.25em}
\end{figure}
\endgroup

\section{Conclusion and Limitations}
\label{sec:conclusion}

We presented \trace{}, a deterministic, training-free inference-time algorithm that selects its correction operator from the cross-layer candidate trajectory $S(x)$ and a once-per-model weights-only invariant $I(M)$. Theorem~\ref{thm:operators} gives the universal structural split: rank-one trajectories admit scalar correction, whereas multi-directional trajectories require a second candidate-space operator class. Under one fixed parameter set $\Theta$, \trace{} improves every (model, benchmark) cell on a 15-model, 3-benchmark factuality grid (mean $+12.26$ MC1, $+8.65$ MC2; 0/45 regressions). \textbf{Limitations.} The current evaluation is limited to English candidate-restricted factuality under MC1/MC2, and multilingual settings and domain-specific benchmarks (medical, legal, scientific) remain open. TRACE acts once on a completed forward pass rather than iteratively across long-form or multi-step generation, and it remains an inference-time method: it corrects decode-time behavior but does not address training-time causes of hallucination, and compositions with retrieval or post-training interventions remain future work. A further practical limitation is wall-clock overhead from the extra layerwise readout: across the reported 15-model grid, deployed \trace{} costs from $1.33\times$ to $3.42\times$ baseline wall-clock, with mean $2.27\times$ and $+0.59$ seconds per item (Appendix~\ref{app:complexity}).

\bibliographystyle{plainnat}
\bibliography{references}

\clearpage
\appendix
\input{appendix}

\end{document}

%% file: appendix.tex
\section{Full Proofs and Scope of the Structural Results}
\label{app:proofs}

This appendix records the full linear-algebra facts behind Proposition~\ref{prop:deff} and Theorem~\ref{thm:operators}, together with a precise statement of the theorem's scope. The main text states the structural split; this section makes the proof obligations explicit.

\subsection{Auxiliary facts for centered trajectories}

\begin{lemma}[Centering places the trajectory in $\mathbf 1^\perp$]
\label{lem:centered}
Let $X(x)=S_{\mathrm{mid}}(x)-\tfrac1n\mathbf 1\mathbf 1^\top S_{\mathrm{mid}}(x)\in\mathbb R^{n\times m}$ be the centered trajectory matrix from Eq.~\eqref{eq:centered-trajectory}. Then every column of $X(x)$ sums to zero:
\[
\mathbf 1^\top X(x)=0^\top.
\]
Consequently $\operatorname{col}(X(x))\subseteq \mathbf 1^\perp$ and $\operatorname{rank}(X(x))\le n-1$.
\end{lemma}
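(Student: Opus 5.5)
The plan is a direct computation: show that $\mathbf 1^\top X(x)$ vanishes, then read off the two consequences from standard linear algebra.

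\textbf{Step 1 (column sums).} Left-multiply the definition $X(x)=S_{\mathrm{mid}}(x)-\tfrac1n\mathbf 1\mathbf 1^\top S_{\mathrm{mid}}(x)$ by $\mathbf 1^\top$. Using $\mathbf 1^\top\mathbf 1=n$, this gives
\[
\mathbf 1^\top X(x)=\mathbf 1^\top S_{\mathrm{mid}}(x)-\tfrac1n(\mathbf 1^\top\mathbf 1)\,\mathbf 1^\top S_{\mathrm{mid}}(x)=\mathbf 1^\top S_{\mathrm{mid}}(x)-\mathbf 1^\top S_{\mathrm{mid}}(x)=0^\top .
\]
Equivalently, $P=I-\tfrac1n\mathbf 1\mathbf 1^\top$ is the orthogonal projector onto $\mathbf 1^\perp$, and $X(x)=PS_{\mathrm{mid}}(x)$.

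\textbf{Step 2 (column space).} Every column $X(x)e_k$ satisfies $\mathbf 1^\top X(x)e_k=0$, so each column lies in $\mathbf 1^\perp$. Since $\mathbf 1^\perp$ is a subspace, it contains every linear combination of the columns. Hence $\operatorname{col}(X(x))\subseteq\mathbf 1^\perp$.

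\textbf{Step 3 (rank).} We have $\dim\mathbf 1^\perp=n-1$ because $\mathbf 1\neq 0$. Since $\operatorname{rank}X(x)=\dim\operatorname{col}X(x)$, it follows that $\operatorname{rank}X(x)\le n-1$.

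No step is a genuine obstacle. The only care needed is bookkeeping: $\mathbf 1$ must be the all-ones vector in $\mathbb R^n$, indexing candidates, so that $\mathbf 1^\top\mathbf 1=n$ cancels the $\tfrac1n$ factor. This is the correct side, since the centering is applied on the left to the $n$-row matrix.
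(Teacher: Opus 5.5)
Your proposal is correct and follows the paper's proof essentially line for line. Like the paper, you left-multiply by $\mathbf 1^\top$, use $\mathbf 1^\top\mathbf 1=n$ to cancel the $\tfrac1n$ factor, and then read off $\operatorname{col}(X(x))\subseteq\mathbf 1^\perp$ and $\operatorname{rank}(X(x))\le n-1$ from $\dim\mathbf 1^\perp=n-1$; your remark that $X(x)=PS_{\mathrm{mid}}(x)$ with $P$ the orthogonal projector is a harmless extra the paper does not need.
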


\begin{proof}
Write $S=S_{\mathrm{mid}}(x)$. Then
\[
\mathbf 1^\top X
=
\mathbf 1^\top S-\tfrac1n\mathbf 1^\top\mathbf 1\mathbf 1^\top S
=
\mathbf 1^\top S-\mathbf 1^\top S
=
0^\top.
\]
Thus every column of $X$ lies in the $(n-1)$-dimensional hyperplane $\mathbf 1^\perp=\{v\in\mathbb R^n:\mathbf 1^\top v=0\}$, so the column space of $X$ is a subspace of $\mathbf 1^\perp$ and $\operatorname{rank}(X)\le n-1$.
\end{proof}

\begin{lemma}[Affine shifts and positive rescaling preserve the argmax]
\label{lem:affine-argmax}
For any $\mathbf u\in\mathbb R^n$, any $a>0$, and any $b\in\mathbb R$, 
\[
\arg\max_i (a u_i+b)=\arg\max_i u_i.
\]
\end{lemma}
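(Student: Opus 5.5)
The plan is to show that the map $u_i\mapsto a u_i+b$ preserves every strict and non-strict pairwise comparison between coordinates. Since the argmax is determined entirely by these comparisons, equality of the argmax sets follows. We read $\arg\max$ as the set of maximizing indices, which also covers ties; if a deterministic tie-breaking rule by index is used, the same argument applies, because that rule depends only on the comparisons.

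The key step is a single identity. For any pair $(i,j)$,
\[
(a u_i+b)-(a u_j+b)=a(u_i-u_j).
\]
Because $a>0$, the sign of the left side equals the sign of $u_i-u_j$. Hence $a u_i+b\ge a u_j+b$ holds if and only if $u_i\ge u_j$, and likewise for strict inequality and for equality.

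We then unpack the definition of the argmax. An index $i$ lies in $\arg\max_k(a u_k+b)$ exactly when $a u_i+b\ge a u_j+b$ for all $j$. By the equivalence above, this holds exactly when $u_i\ge u_j$ for all $j$, which is the condition for $i\in\arg\max_k u_k$. The two sets therefore coincide, and any index-based tie-break selects the same element from each.

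No step is a real obstacle. The only point needing care is the hypothesis $a>0$. With $a=0$ every index ties, and with $a<0$ the order reverses, so strict positivity is exactly what the pairwise sign argument uses. This is the same sign-preservation step already used for the affine claim in Theorem~\ref{thm:operators}(i), and the lemma can be cited there with $a=\alpha+\beta$ applied to the mixture $(1-\lambda)\mathbf b+\lambda\mathbf t$.
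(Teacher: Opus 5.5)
Your proof is correct and follows the paper's argument: the identity $(a u_i+b)-(a u_j+b)=a(u_i-u_j)$ with $a>0$ preserves the sign of every pairwise comparison, so the argmax is unchanged. Your explicit unpacking of $\arg\max$ as a set of maximizers, which also covers ties, spells out the step the paper compresses into ``the candidate ordering and therefore the argmax are identical.''
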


\begin{proof}
For any pair $(i,j)$,
\[
(a u_i+b)-(a u_j+b)=a(u_i-u_j).
\]
Because $a>0$, the sign of every pairwise difference is unchanged. The candidate ordering and therefore the argmax are identical.
\end{proof}

\subsection{Expanded proof of Proposition~\ref{prop:deff}}

\begin{proof}[Proof of Proposition~\ref{prop:deff}]
Let $\lambda_1,\dots,\lambda_r>0$ be the nonzero eigenvalues of $C(x)=X(x)X(x)^\top$, where $r=\operatorname{rank}(X(x))=\operatorname{rank}(C(x))$. By Eq.~\eqref{eq:deff},
\[
d_{\mathrm{eff}}(x)=\frac{(\sum_{j=1}^{r}\lambda_j)^2}{\sum_{j=1}^{r}\lambda_j^2}.
\]
Because all $\lambda_j$ are nonnegative,
\[
\Bigl(\sum_{j=1}^{r}\lambda_j\Bigr)^2
=
\sum_{j=1}^{r}\lambda_j^2
+2\sum_{1\le i<j\le r}\lambda_i\lambda_j
\ge
\sum_{j=1}^{r}\lambda_j^2,
\]
so $d_{\mathrm{eff}}(x)\ge 1$. Equality holds if and only if every cross-term $\lambda_i\lambda_j$ vanishes, i.e.\ if and only if exactly one eigenvalue is nonzero. This is equivalent to $r=1$.

For the upper bound, apply Cauchy--Schwarz to the vectors $(1,\dots,1)\in\mathbb R^r$ and $(\lambda_1,\dots,\lambda_r)\in\mathbb R^r$:
\[
\Bigl(\sum_{j=1}^{r}\lambda_j\Bigr)^2
\le
\Bigl(\sum_{j=1}^{r}1^2\Bigr)\Bigl(\sum_{j=1}^{r}\lambda_j^2\Bigr)
=
r\sum_{j=1}^{r}\lambda_j^2.
\]
Dividing by $\sum_j\lambda_j^2>0$ gives $d_{\mathrm{eff}}(x)\le r$.

Finally, Lemma~\ref{lem:centered} gives $\operatorname{rank}(X(x))\le n-1$, hence $r\le n-1$. When $n=2$, this forces $r\le 1$; since $X(x)\neq 0$, we have $r=1$, and therefore $d_{\mathrm{eff}}(x)=1$.
\end{proof}

\subsection{Expanded proof of Theorem~\ref{thm:operators}}

\begin{proof}[Proof of Theorem~\ref{thm:operators}]
\emph{Part (i).} If $d_{\mathrm{eff}}(x)=1$, Proposition~\ref{prop:deff} gives $\operatorname{rank}(X(x))=1$. Therefore there exist nonzero vectors $\mathbf u(x)\in\mathbb R^n$ and $\mathbf w(x)\in\mathbb R^{|\mathcal L_{\mathrm{mid}}|}$ such that
\[
X(x)=\mathbf u(x)\mathbf w(x)^\top,
\]
so the candidate-space subspace supported by the centered trajectory is
\[
\mathcal U(x)=\operatorname{col}(X(x))=\operatorname{span}\{\mathbf u(x)\}.
\]
This proves the first statement.

For the affine reduction, let
\[
\mathbf z=\alpha\mathbf b(x)+\beta\mathbf t(x)+\gamma\mathbf 1,\qquad \alpha+\beta>0,
\]
and define
\[
\lambda=\frac{\beta}{\alpha+\beta}.
\]
Then
\[
\mathbf z
=
(\alpha+\beta)\Bigl((1-\lambda)\mathbf b(x)+\lambda\mathbf t(x)\Bigr)+\gamma\mathbf 1.
\]
Applying Lemma~\ref{lem:affine-argmax} with $a=\alpha+\beta>0$ and $b=\gamma$ yields
\[
\arg\max_i z_i=
\arg\max_i\Bigl((1-\lambda)b_i(x)+\lambda t_i(x)\Bigr).
\]
Thus every affine score built from $(\mathbf b(x),\mathbf t(x),\mathbf 1)$ collapses to a one-parameter scalar family.

\smallskip
\noindent\emph{Part (ii).} Consider
\[
\mathbf b=\begin{pmatrix}3\\2\\1\end{pmatrix},\qquad
\mathbf t=\begin{pmatrix}2\\3\\1\end{pmatrix},\qquad
\mathbf q=\begin{pmatrix}1\\1\\5\end{pmatrix}.
\]
Every scalar mixture of $\mathbf b$ and $\mathbf t$ has the form
\[
(1-\lambda)\mathbf b+\lambda \mathbf t
=
\begin{pmatrix}
3-\lambda\\
2+\lambda\\
1
\end{pmatrix}.
\]
For candidate $3$ to be the argmax, we would need simultaneously
\[
1>3-\lambda
\qquad\text{and}\qquad
1>2+\lambda,
\]
that is,
\[
\lambda>2
\qquad\text{and}\qquad
\lambda<-1,
\]
which is impossible. Hence candidate $3$ is never the argmax of any scalar mixture $(1-\lambda)\mathbf b+\lambda\mathbf t$.

Now center $\mathbf b$ and $\mathbf q$:
\[
\mathbf b-\bar b\mathbf 1=
\begin{pmatrix}
1\\0\\-1
\end{pmatrix},
\qquad
\mathbf q-\bar q\mathbf 1=
\begin{pmatrix}
-\tfrac43\\ -\tfrac43\\ \tfrac83
\end{pmatrix}.
\]
These two vectors are not scalar multiples of one another, so a centered trajectory containing them has rank at least $2$. Therefore a rank-$2$ candidate-space trajectory can realize an argmax pattern (candidate $3$ winning) that no scalar mixture of $(\mathbf b,\mathbf t)$ can realize.

The final consequence follows immediately. If a domain contains both one-directional items ($d_{\mathrm{eff}}=1$) and multi-directional items ($d_{\mathrm{eff}}>1$), then part (i) shows that scalar mixtures suffice in the first regime, while part (ii) constructs a multi-directional case that escapes the scalar family entirely. No correction rule restricted to scalar mixtures of $(\mathbf b(x),\mathbf t(x))$ can therefore be universal on such a domain; a second operator class that acts directly in candidate space is necessary.
\end{proof}

\begin{remark}[Scope of Theorem~\ref{thm:operators}]
\label{rem:theorem-scope}
Theorem~\ref{thm:operators} is a \emph{universality} statement, not an itemwise inevitability statement. It shows that once the centered trajectory leaves rank one, scalar mixtures are no longer complete as a correction family. It does \emph{not} claim that every item with $d_{\mathrm{eff}}(x)>1$ must be overridden, nor that every such item requires the specific decisive-layer operator used by \trace{}. The theorem explains why a scalar-only algorithm cannot be universal; the candidate-space branch of \trace{} is one deterministic operator class that closes this gap.
\end{remark}

\section{Exact Definition of the Trajectory Scorer and Model Invariant}
\label{app:exact}

\subsection{Trajectory scorer \texorpdfstring{$\mathcal T$}{T}}
\label{app:scorer}

The scalar regime of TRACE consumes a pair of answer-level vectors $(\mathbf b(x),\mathbf t(x))$. The first, $\mathbf b(x)=\mathbf s_L(x)$, is the standard length-normalized log-probability of each candidate under the final-layer logits, i.e.\ what an LM head outputs by default. The second is produced by the trajectory scorer $\mathcal T$, the subject of this subsection. Informally, $\mathcal T$ rescores the same candidates after asking, at every continuation position, whether the candidate's token also built up support across the network's depth and not only at the top: candidates whose support is internally consistent across layers are promoted, while candidates whose support appears only at the final layer are left essentially unchanged. Concretely, $\mathcal T$ takes the same candidate-conditioned forward pass that produces $\mathbf b(x)$, keeps the vocabulary items that recur as top candidates across the anchor depths $A$, measures whether their log-probabilities build coherently across the feature depths $G$ via three trajectory features (slope, jump, curvature), uses that measured coherence as a per-token mixing weight to blend a depth-weighted anchor mixture into the final-layer logits, and aggregates the resulting calibrated logits to one length-normalized log-probability per candidate, $\mathbf t(x)=\mathcal T(S(x);\theta_{\mathcal T})$.

Such a parallel reading is not a heuristic add-on. Theorem~\ref{thm:operators}(i) shows that on rank-one items the admissible answer-level family collapses to a one-parameter blend over the pair $(\mathbf b,\mathbf t)$, so the scalar branch needs a second answer-level reading besides the raw final-layer score $\mathbf b$, and $\mathcal T$ supplies it by exposing the depth axis. By construction $\mathcal T$ is a \emph{controlled correction} of the final layer rather than an alternative decoder: a tethering floor keeps every vocabulary item bound to $z_{L,r}$, and the anchor-mixture contribution rises only with cross-depth coherence. Algorithm~\ref{alg:trace} consumes the resulting pair $(\mathbf b,\mathbf t)$ through the $\lambda(x)$ rule of Eq.~\eqref{eq:lambda}; the candidate-space branch of TRACE bypasses $\mathcal T$ entirely. The remainder of this subsection records the components of $\mathcal T$ in full (the anchor logits and recurrence filter, the three trajectory features, the adaptive mixing rule, and the per-candidate aggregation), with all numeric constants collected in Table~\ref{tab:appendix-scorer-constants}.

\paragraph{Anchor logits and recurrence filter.}
Fix a prompt $x$, a candidate $c_i=(v_{i,1},\dots,v_{i,m_i})$, and a continuation position $r$. Let
\[
z_{\ell,r}(v)=\bigl[W_U\mathcal N(\mathbf h_{\ell,r}(x,c_i))\bigr]_v,
\qquad
p_{\ell,r}(v)=\log\operatorname{softmax}(z_{\ell,r})_v.
\]
$\mathcal T$ probes two fixed sets of relative depths $\mathcal A,\mathcal G\subset(0,1]$ (one for the anchor mixture and one for trajectory features) whose explicit values, sizes, and motivation are listed in Table~\ref{tab:appendix-scorer-constants}; informally, both span the second half of the network and $\mathcal A$ additionally includes one early-mid depth. Each fraction is converted to a layer index by $\ell(f)=\min\{L,\lceil fL\rceil\}$, and the resulting integer sets are written $A$ and $G$. Because intermediate logit-lens distributions are visibly broader than the final-layer distribution, averaging them over the full vocabulary $V$ would mostly aggregate transient low-confidence mass. $\mathcal T$ therefore restricts attention to vocabulary items that are top-ranked at several anchor depths: with a top-$k$ cutoff $k$ and quorum $r_\Omega$ taken from Table~\ref{tab:appendix-scorer-constants}, the recurrence-filtered set
\[
\Omega_r
=
\Bigl\{
v\in V:
\sum_{\ell\in A}\mathbf 1\!\left[v\in \operatorname{TopK}_k(z_{\ell,r})\right]
\ge r_\Omega
\Bigr\}
\]
collects exactly the tokens that recur as top candidates across at least $r_\Omega$ of the anchor depths. The anchor mixture is then formed on $\Omega_r$ by an exponential depth weighting,
\[
w_\ell
=
\frac{\exp(\ell/L)}{\sum_{\ell'\in A}\exp(\ell'/L)},
\qquad
\bar z_r(v)=\sum_{\ell\in A}w_\ell z_{\ell,r}(v),
\]
with later anchors carrying more weight, in line with the empirical observation that representations near the top are typically more decision-relevant than those near the input. $\Omega_r$ encodes the heuristic that recurrence across depth is evidence of a stable internal hypothesis, while one-off appearances are not.

\paragraph{Three trajectory features detect three forms of cross-depth support.}
For each token $v$ surviving the recurrence filter, $\mathcal T$ extracts three real-valued features from the log-probability trace $(p_{\ell,r}(v))_{\ell\in G}$. Writing $g_j$ for the $j$-th element of $G$ in increasing order and using the normalized depth coordinates $x_j=(j-1)/(|G|-1)$ together with the trace mean $\bar p_r(v)=\tfrac1{|G|}\sum_j p_{g_j,r}(v)$ and depth mean $\bar x=\tfrac1{|G|}\sum_j x_j$, the three features are
\[
\operatorname{slope}_r(v)
=
\frac{\sum_j(x_j-\bar x)\bigl(p_{g_j,r}(v)-\bar p_r(v)\bigr)}{\sum_j(x_j-\bar x)^2},
\qquad
\operatorname{jump}_r(v)=\max_{1\le j<|G|}\bigl(p_{g_{j+1},r}(v)-p_{g_j,r}(v)\bigr),
\]
\[
\operatorname{curv}_r(v)
=
\frac{1}{|G|-2}\sum_{j=2}^{|G|-1}\bigl(p_{g_{j+1},r}(v)-2p_{g_j,r}(v)+p_{g_{j-1},r}(v)\bigr).
\]
Each isolates one cross-depth motif: $\operatorname{slope}$ rewards tokens whose support strengthens steadily across the feature depths; $\operatorname{jump}$ rewards a single sharp promotion at one block boundary; and $\operatorname{curv}$ rewards acceleration, the case in which a token is not only rising but rising faster as the network progresses. These are exactly the motifs visible in Figure~\ref{fig:appendix-scorer-features}(a)--(c). Taking the positive parts $[u]_+=\max(u,0)$ then ensures that decreasing or oscillating traces contribute no evidence for borrowing from intermediate layers, and the resulting per-token evidence score is
\[
h_r(v)
=
\beta_s\,[\operatorname{slope}_r(v)]_+
+\beta_j\,[\operatorname{jump}_r(v)]_+
+\beta_c\,[\operatorname{curv}_r(v)]_+,
\]
where the feature weights $(\beta_s,\beta_j,\beta_c)$ are fixed in Table~\ref{tab:appendix-scorer-constants} with the heaviest weight on $\operatorname{jump}$, since abrupt promotion at one depth is the most diagnostic of an internal hypothesis crystallizing. To compare tokens on a common $[0,1]$ scale, $\mathcal T$ then min-max normalizes $h_r$ over $\Omega_r$ when nonempty and over $V$ otherwise, giving the normalized evidence $\widetilde h_r(v)\in[0,1]$.

\begin{figure}[t]
\centering
\includegraphics[width=\linewidth]{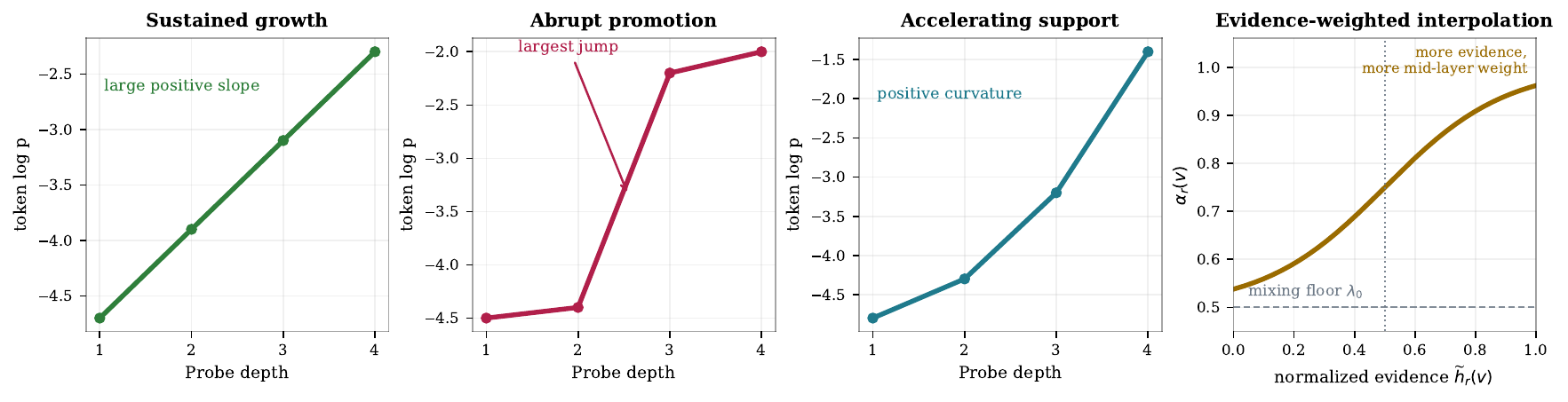}
\caption{\textbf{What the scalar-regime scorer rewards across depth, and how that evidence sets the mixing weight.} Panels (a)--(c) show three illustrative log-probability traces $p_{\ell,r}(v)$ over the feature depths $G$, each exhibiting one of the motifs $\mathcal T$ rewards via the positive parts of $h_r(v)$: sustained growth, abrupt promotion, and accelerating support. Panel (d) plots the resulting mixing schedule $\alpha_r(v)$ against normalized evidence $\widetilde h_r(v)$. Definitions in Section~\ref{app:scorer}; constants in Table~\ref{tab:appendix-scorer-constants}.}
\label{fig:appendix-scorer-features}
\end{figure}

\paragraph{Adaptive mixing keeps every token tethered to the final layer.}
$\widetilde h_r(v)$ is then mapped to a tokenwise mixing coefficient through a sigmoid with floor, and the calibrated logits are the resulting per-token convex combination:
\[
\alpha_r(v)
=
\lambda_0+(1-\lambda_0)\,\sigma\!\bigl(\gamma(\widetilde h_r(v)-\tfrac12)\bigr),
\qquad
z^{\mathcal T}_r(v)
=
\bigl(1-\alpha_r(v)\bigr)z_{L,r}(v)+\alpha_r(v)\bar z_r(v),
\]
with mixing parameters $(\lambda_0,\gamma)$ given in Table~\ref{tab:appendix-scorer-constants}. Two design choices are visible here. The floor $\lambda_0$ guarantees that every token retains nontrivial weight on the final-layer logit even when its trajectory evidence is zero; this is the formal sense in which $\mathcal T$ is a correction of the final layer rather than a replacement for it. The sigmoid centred at $\widetilde h_r=\tfrac12$ with slope $\gamma$ converts evidence to weight smoothly, so a token does not flip from ``inherit final layer'' to ``trust mid-layer mixture'' across an arbitrary threshold but slides between them as cross-depth coherence accumulates.

\paragraph{Aggregating to one score per candidate.}
The candidate score emitted by $\mathcal T$ is the length-normalized log-probability under the calibrated logits, and the companion base score is the same aggregation under the raw final-layer logits:
\[
t_i(x)
=
\frac{1}{m_i}\sum_{r=1}^{m_i}\log\operatorname{softmax}\!\bigl(z^{\mathcal T}_r\bigr)_{v_{i,r}},
\qquad
b_i(x)
=
\frac{1}{m_i}\sum_{r=1}^{m_i}\log\operatorname{softmax}\!\bigl(z_{L,r}\bigr)_{v_{i,r}}.
\]
The pair $(\mathbf b(x),\mathbf t(x))$ is therefore produced from the same candidate-conditioned forward passes that produce $S(x)$; no auxiliary supervision, classifier, or model adaptation is introduced. Algorithm~\ref{alg:trace} consumes this pair through the $\lambda(x)$ rule of Eq.~\eqref{eq:lambda}; Figure~\ref{fig:appendix-scorer} summarizes the dataflow.

\paragraph{Robustness and hyperparameter summary.}
Three structural guards make $\mathcal T$ insensitive to small perturbations of $\mathcal A,\mathcal G$: the recurrence filter $\Omega_r$ discards depth-inconsistent tokens, the exponential depth weighting $w_\ell\propto\exp(\ell/L)$ down-weights very early anchors, and the mixing floor $\lambda_0$ caps how far $z^{\mathcal T}_r$ can drift from $z_{L,r}$. We therefore treat every numeric constant in $\mathcal T$ as architecture-agnostic: the same set $\Theta$ is used unchanged across all 15 models and 3 benchmarks. Table~\ref{tab:appendix-scorer-constants} lists $\Theta$ in full with each constant's role and the equation in which it first appears; Figure~\ref{fig:appendix-scorer} shows how the constants enter the four components of the scorer.

\begin{figure}[H]
\centering
\includegraphics[width=\linewidth]{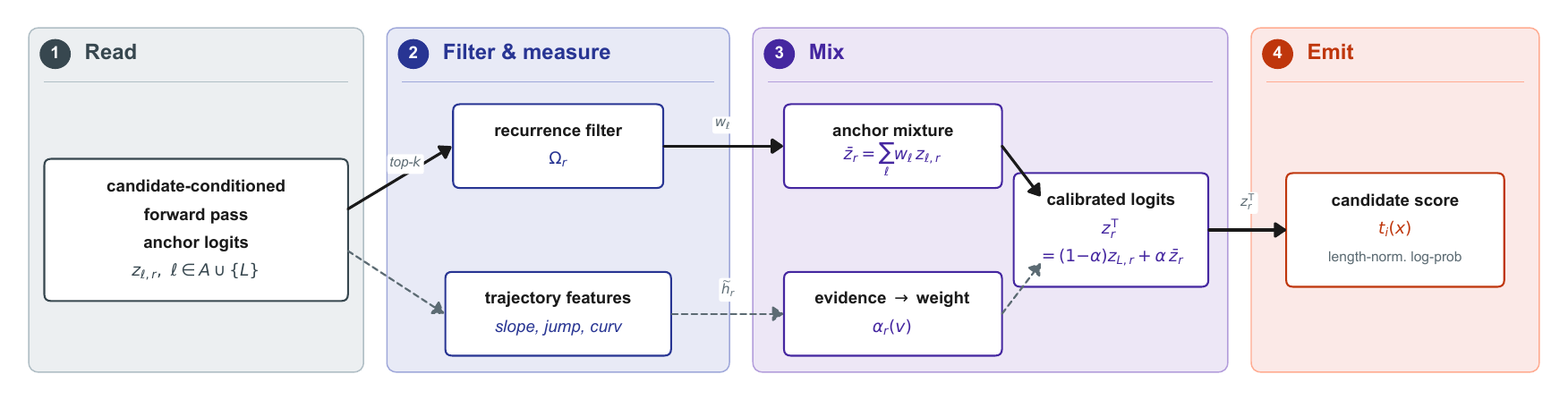}
\caption{\textbf{Dataflow of the scalar-regime scorer $\mathcal T$.} Four scorer components: \textsf{Read} produces the anchor logits $z_{\ell,r}$ at $A\cup\{L\}$; \textsf{Filter \& measure} restricts the vocabulary via $\Omega_r$ and computes the trajectory features (slope, jump, curvature); \textsf{Mix} combines the anchor mixture $\bar z_r$ and the per-token evidence weight $\alpha_r(v)$ with $z_{L,r}$ to form the calibrated logits $z^{\mathcal T}_r$; \textsf{Emit} aggregates to one length-normalized log-probability $t_i(x)$ per candidate. Solid arrows: main logit path. Dashed arrows: per-token evidence path. Definitions in Section~\ref{app:scorer}.}
\label{fig:appendix-scorer}
\end{figure}

\begin{table}[H]
\centering
\footnotesize
\caption{\textbf{Scorer constants $\Theta$.} The complete set of hyperparameters used by the scalar-regime scorer $\mathcal T$ (Section~\ref{app:scorer}). All values are global: they are not tuned per model or per benchmark.}
\label{tab:appendix-scorer-constants}
\renewcommand{\arraystretch}{1.05}
\begin{tabular}{@{}llll@{}}
\toprule
\textbf{Symbol} & \textbf{Role} & \textbf{Value} & \textbf{First used} \\
\midrule
\multicolumn{4}{@{}l}{\emph{Anchor and feature depths}} \\
$\mathcal A$         & relative anchor depths            & $\{0.2692,\,0.5769,\,0.8461,\,1.0\}$       & Eq.\ before $\Omega_r$ \\
$\mathcal G$         & relative feature depths           & $\{0.50,\,0.6923,\,0.8461,\,1.0\}$         & same \\
$\ell(f)$            & depth-fraction $\to$ layer index  & $\min\{L,\lceil fL\rceil\}$                & same \\
$|A|=|G|$            & depths per set                    & $4$                                         & same \\
\midrule
\multicolumn{4}{@{}l}{\emph{Recurrence filter and anchor mixture}} \\
$k$                  & top-$k$ cutoff per anchor depth   & $\max\{50,\lceil 0.005|V|\rceil\}$         & Eq.\ for $\Omega_r$ \\
$r_\Omega$           & recurrence quorum                 & $\lceil 0.75|A|\rceil = 3$                  & Eq.\ for $\Omega_r$ \\
$w_\ell$             & depth weights for $\bar z_r$      & $\propto \exp(\ell/L)$, normalized over $A$ & Eq.\ for $\bar z_r$ \\
\midrule
\multicolumn{4}{@{}l}{\emph{Trajectory features and evidence score}} \\
$(\beta_s,\beta_j,\beta_c)$ & feature weights in $h_r$   & $(0.3,\,0.5,\,0.2)$                         & Eq.\ for $h_r$ \\
norm.\ scope        & domain for min--max of $h_r$       & $\Omega_r$ if nonempty, else $V$           & def.\ of $\widetilde h_r$ \\
\midrule
\multicolumn{4}{@{}l}{\emph{Adaptive mixing and aggregation}} \\
$\lambda_0$          & mixing floor on $z_{L,r}$         & $0.5$                                       & Eq.\ for $\alpha_r$ \\
$\gamma$             & sigmoid slope in $\alpha_r$       & $5$                                         & same \\
agg.\ rule          & per-candidate aggregation         & length-normalized $\log\operatorname{softmax}$ at own tokens & Eq.\ for $t_i,b_i$ \\
\bottomrule
\end{tabular}
\end{table}

\subsection{Model invariant \texorpdfstring{$I(M)$}{I(M)}}
\label{app:invariant}

The scalar branch's two operators (signed mixing vs.\ earlier-state recovery) are dispatched by the weights-only invariant $I(M)$ defined in Eq.~\eqref{eq:invariant}. The exact $I(M)$ values used by TRACE are listed alongside each model in Table~\ref{tab:master}; this subsection records the lemma that makes $I(M)$ dimensionless and the convention for extracting its constituent matrices across architectures.

\begin{lemma}[Basic properties of $\operatorname{rcv}$]
\label{lem:rcv}
For any nonzero matrix $A\in\mathbb R^{r\times c}$ and any scalar $\alpha>0$,
\[
\operatorname{rcv}(\alpha A)=\operatorname{rcv}(A)\ge 0,
\]
and $\operatorname{rcv}(A)=0$ if and only if all row norms of $A$ are equal.
\end{lemma}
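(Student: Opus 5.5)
The plan is to work directly from the definition in Eq.~\eqref{eq:rcv}. Write $\nu_j=\|A_{j:}\|_2$ for the row norms. Everything then reduces to elementary facts about the sample mean and standard deviation of a nonnegative vector $(\nu_1,\dots,\nu_r)$.

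\textbf{Well-definedness.} Because $A\neq 0$, at least one row is nonzero. All $\nu_j\ge 0$, so
\[
\operatorname{mean}_j\nu_j=\tfrac1r\sum_j\nu_j>0,
\]
and the ratio in Eq.~\eqref{eq:rcv} is defined. This is the only place the hypothesis $A\neq 0$ is used, and it is the one point requiring care; the rest is routine.

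\textbf{Scale invariance.} By absolute homogeneity of the Euclidean norm, each row of $\alpha A$ has norm $\|\alpha A_{j:}\|_2=|\alpha|\,\nu_j=\alpha\nu_j$ for $\alpha>0$. Both operations in the ratio are positively homogeneous of degree one:
\[
\operatorname{mean}(\alpha\nu)=\alpha\operatorname{mean}(\nu),\qquad \operatorname{sd}(\alpha\nu)=\alpha\operatorname{sd}(\nu).
\]
The second follows because the sd is the square root of the mean (or $r-1$-normalized sum) of squared deviations, which scale by $\alpha^2$. The factor $\alpha$ therefore cancels, giving $\operatorname{rcv}(\alpha A)=\operatorname{rcv}(A)$. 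Nonnegativity is immediate, since the numerator is a square root of a sum of squares and the denominator is positive.

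\textbf{Equality case.} Since the denominator is strictly positive, $\operatorname{rcv}(A)=0$ if and only if $\operatorname{sd}_j\nu_j=0$. That holds if and only if
\[
\sum_j(\nu_j-\bar\nu)^2=0,
\]
which means every $\nu_j=\bar\nu$, i.e.\ all row norms are equal. The argument is the same whichever normalization ($1/r$ or $1/(r-1)$) the sd uses. If $r=1$, I would adopt the population convention, so that $\operatorname{sd}=0$ and the statement holds trivially.
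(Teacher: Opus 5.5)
Your proposal is correct and follows essentially the same route as the paper's proof. Both set $\nu_j=\|A_{j:}\|_2$, use $\mu>0$ from $A\neq 0$, let the factor $\alpha$ cancel because mean and standard deviation are positively homogeneous of degree one, and note that $\sigma=0$ iff all $\nu_j$ coincide. The paper uses the population standard deviation from the outset, which agrees with the convention you adopt for the $r=1$ edge case.
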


\begin{proof}
Let $\nu_j=\|A_{j:}\|_2$, with mean $\mu$ and population standard deviation $\sigma$. Under rescaling by $\alpha>0$, the row norms become $\alpha\nu_j$, so the mean and standard deviation become $\alpha\mu$ and $\alpha\sigma$, giving $\operatorname{rcv}(\alpha A)=\alpha\sigma/\alpha\mu=\operatorname{rcv}(A)$. Nonnegativity follows from $\sigma,\mu\ge 0$ with $\mu>0$ when $A\neq 0$; finally $\sigma=0$ iff all $\nu_j$ are equal.
\end{proof}

By Lemma~\ref{lem:rcv}, every factor in Eq.~\eqref{eq:phi} is dimensionless and scale-invariant under uniform rescaling of its tensor input, so $I(M)$ is comparable across architectures even when their weights live on different absolute scales. To preserve that comparability across models of different depth, the invariant locates its constituent matrices by depth fraction rather than absolute index: it uses the early and mid layers $e=\lfloor 0.20L\rfloor$ and $m=\lfloor 0.50L\rfloor$, exactly as in the main text, so the same semantic positions are read regardless of how many transformer blocks the model contains.

How those matrices are extracted depends on the family. For standard decoder architectures (LLaMA, Gemma, Qwen, GPT-OSS, Mixtral, Ministral), $W_K^{(e)}$, $W_V^{(e)}$, and $W_O^{(m)}$ are read directly from the attention projections at the chosen layers. For Phi-3 and Phi-4, the fused query-key-value projection is partitioned rowwise into $(Q,K,V)$ blocks according to the published attention layout, and the $K$ and $V$ rows are then used as $W_K^{(e)},W_V^{(e)}$. For DeepSeek MLA, the corresponding low-rank key/value factors are used in place of the dense projections. The only requirement throughout is that the same semantic projections be extracted consistently across architectures, so that $I(M)$ measures the same quantity in all cases.

\section{Ablation Study}
\label{app:ablations}

All ablations in this section use the reported evaluation grid, candidate sets, trajectory readout, and evaluation metrics. Each row replaces one component of the decision rule while leaving the remaining computation unchanged: the regime split induced by $d_{\mathrm{eff}}(x)$, the scalar sub-dispatch induced by $I(M)$, the candidate-space vetoes, or the outer scalar constants.

Theorem~\ref{thm:operators} gives the mathematical baseline. If the centered trajectory is effectively rank one, the admissible correction family collapses to one scalar degree of freedom. If the trajectory remains genuinely multi-directional, scalar families are not universal over $\operatorname{span}X(x)$. The ablations test that structural split on the 45-cell evaluation grid.

Table~\ref{tab:abl-necessity} reports the corresponding empirical pattern. Forcing every item into the multi-directional arm drives mean performance below zero ($-3.08$ MC1, $-1.30$ MC2) and creates regressions on $25/45$ cells. Those failures occur where the candidate geometry is binary or numerically rank one, i.e.\ where candidate-space re-ranking is not the appropriate operator class. Forcing every item into the scalar arm is less destructive because the binary HaluEval candidate sets are scalar by construction, but it still loses almost three MC1 points overall and introduces $9$ MC1 regressions. Those regressions concentrate in TruthfulQA cells whose trajectories remain multi-directional. The two rows therefore identify a regime mismatch rather than a global ordering of operators.

Within the scalar regime, the two operators address different failure modes. Removing signed mixing leaves only $+4.54$ MC1; removing earliest-state recovery leaves $+9.82$ MC1; removing both leaves only the candidate-space contribution ($+2.10$ MC1). Signed mixing assumes that the scalar trajectory summary $\mathbf t(x)$ is a sharper reading of the same hypothesis already visible at the final layer. Earliest-state recovery assumes that the final layer has drifted past a more reliable earlier state, so any further late-weighted summary would reproduce that drift. The dispatcher $I(M)$ selects between these two assumptions from the model weights.

The rows that ignore $I(M)$ identify the role of the model-side dispatcher directly. Applying signed mixing to every model creates $8$ negative MC1 cells and $6$ negative MC2 cells. Applying earliest-state recovery to every model collapses the gain profile to the same $+4.54$/$+2.85$ result as removing signed mixing altogether. On low-$I(M)$ architectures, forcing signed mixing applies a late-weighted correction in models whose late layers are least trustworthy. On high-$I(M)$ architectures, forcing earliest-state recovery discards a late amplification signal that remains useful. The purpose of $I(M)$ is to align the scalar operator with the architecture's balance between late amplification and early routing dominance.

\begin{table}[tp]
\centering
\footnotesize
\caption{\textbf{Necessity of TRACE components.} Mean $\Delta$MC1 / $\Delta$MC2 versus baseline across the 45-cell grid, with cell-level regression counts. Removing the regime split or the $I(M)$ dispatcher introduces regressions; removing any operator costs material gain.}
\label{tab:abl-necessity}
\renewcommand{\arraystretch}{1.05}
\begin{tabular}{@{}lrrrr@{}}
\toprule
\textbf{Variant} & \textbf{$\Delta$MC1} & \textbf{$\Delta$MC2} & \textbf{neg.\ MC1 cells} & \textbf{neg.\ MC2 cells} \\
\midrule
TRACE (published) &  $+12.26$ & $+8.65$ & 0 & 0 \\
\midrule
Force every item into multi-directional arm & $-3.08$ & $-1.30$ & \textbf{25} & \textbf{25} \\
Force every item into scalar arm            & $+9.31$ & $+7.38$ & \textbf{9}  & 1 \\
\midrule
Drop signed mixing (no $\mathbf t$ in scalar)  & $+4.54$ & $+2.85$ & 0 & 0 \\
Drop earliest-state recovery                   & $+9.82$ & $+7.32$ & 0 & 0 \\
Drop both scalar operators (scalar = $\mathbf b$) & $+2.10$ & $+1.52$ & 0 & 0 \\
Drop multi-directional arm                     & $+10.15$ & $+8.65$ & 2 & 0 \\
\midrule
Force signed mixing on every model (ignore $I(M)$) & $+9.76$ & $+7.62$ & \textbf{8} & 6 \\
Force fallback on every model (ignore $I(M)$)      & $+4.54$ & $+2.85$ & 0 & 0 \\
\bottomrule
\end{tabular}
\end{table}

\begin{figure}[tp]
\centering
\includegraphics[width=\linewidth]{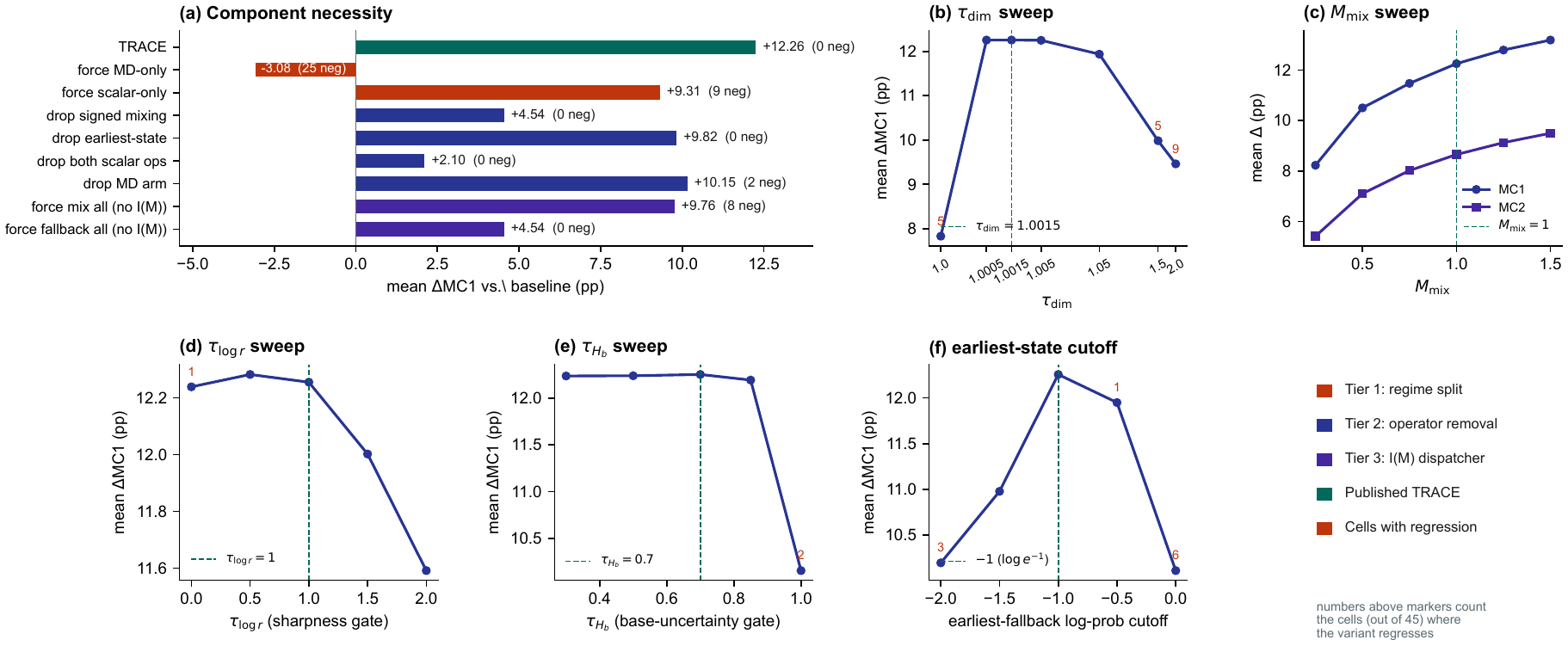}
\caption{\textbf{Component necessity and constant-level sensitivity.} Panel (a) removes or overrides one routed decision at a time. Panels (b)--(f) sweep the constants that control the outer routing logic: the regime boundary $\tau_{\dim}$, the signed-mixing magnitude $M_{\mathrm{mix}}$, the sharpness gate $\tau_{\log r}$, the entropy gate $\tau_{H_b}$, and the earliest-state cutoff. Vertical dashed lines mark the published setting. Red annotations count regressing cells; unannotated points have $0/45$ regressions.}
\label{fig:abl-C}
\end{figure}

Figure~\ref{fig:abl-C}(b)--(f) turns from operator removal to constant-level sensitivity. Two classes of constants appear in the outer rule: structural boundaries and veto thresholds. The regime boundary $\tau_{\dim}$ belongs to the first class. Setting $\tau_{\dim}=1.0$ treats numerical rank-one structure as exact and creates five MC1 regressions. The smallest slack above one ($1.0005$) removes those regressions and remains stable through roughly $1.05$. This is the expected behavior for a boundary that separates exact rank one from numerically near-rank-one trajectories.

The candidate-space gates $\tau_{\log r}$ and $\tau_{H_b}$ belong to the second class. They do not define the geometry of the correction; they block overrides whose evidence is too weak or too diffuse. Their broad non-regressive plateaus are therefore a stability property. The earliest-state cutoff has the same veto character but a more direct interpretation. The published setting $\gamma=\log e^{-1}=-1$ activates earlier-state recovery only when the final layer is already under-decisive on its own scale. Relaxing the cutoff toward $0$ forces fallback in items where the final layer is still too confident to justify discarding the late readout.

The signed-mixing magnitude $M_{\mathrm{mix}}$ has a different status. The sweep is monotone on the present grid, but the operator changes meaning once $M_{\mathrm{mix}}>1$. Beyond that point the scalar rule is no longer interpolating between $\mathbf b(x)$ and $\mathbf t(x)$; it extrapolates past both. The published choice $M_{\mathrm{mix}}=1$ is the largest point that preserves the operator's interpretation as a reweighting of two views of the same computation.

This section concerns the outer routing rule: the geometric split, the model-side scalar dispatch, the candidate-space vetoes, the mixing magnitude, and the earliest-state cutoff. It does not sweep the internal constants of the trajectory scorer $\mathcal T$; those constants are analyzed in Appendix~\ref{app:scorer}. The ablations therefore measure how much of the observed gain depends on the routed logic above a fixed scalar summary $\mathbf t(x)$.

\section{Per-Model Analysis}
\label{app:per-model}

Per-model analysis serves a different purpose from the global results table. The main paper establishes that one frozen parameter set $\Theta$ improves every one of the 45 benchmark cells. Appendix D resolves the same rule model by model. The relevant theoretical question is whether the item-side quantity $d_{\mathrm{eff}}(x)$ decides when scalar collapse is admissible, while the model-side quantity $I(M)$ decides which scalar operator applies once such a collapse exists.

Table~\ref{tab:per-model} and Figure~\ref{fig:pm-D} show that the scalar sub-branch follows exactly this architecture-side split. The nine models with $I(M)>1$ use signed mixing on scalar items and never use earliest-state recovery. The six models with $I(M)\le 1$ do the reverse. No model uses both scalar operators. On the present cohort, the scalar dispatcher behaves as a deterministic architectural rule: once the model is fixed, the scalar operator is fixed as well.

The first TRACE split is not architectural. The overall scalar share is 68.4\% for every model because the benchmark mixture is fixed and because two of the three benchmarks are binary by construction. HaluEval-QA and HaluEval-Sum are binary candidate contests, so every one of their items is scalar by Proposition~\ref{prop:deff}. TruthfulQA supplies nearly all of the multi-directional mass: averaged over the 15 models, only 5.26\% of TruthfulQA items lie in the scalar regime, 37.43\% trigger a multi-directional override, and the remaining multi-directional items abstain back to the base. The repeated scalar share across models therefore reflects repeated candidate structure, not repeated architecture. TRACE's first branch is induced by trajectory geometry; its second branch is induced by model architecture.

\begin{table}[tp]
\centering
\scriptsize
\caption{\textbf{Per-model TRACE behaviour.} Columns: $I(M)$, the scalar operator dispatched by $I(M)$, fraction of items in the scalar regime ($d_{\mathrm{eff}}\le\tau_{\dim}$), fraction of items that actually fire the multi-directional override, signed mixing, or earliest-state recovery (averaged across the three benchmarks), and mean $\Delta$ on MC1/MC2.}
\label{tab:per-model}
\renewcommand{\arraystretch}{1.05}
\setlength{\tabcolsep}{3.2pt}
\begin{tabular}{@{}lrlrrrrrr@{}}
\toprule
\textbf{Model} & \textbf{$I(M)$} & \textbf{scalar op} & \textbf{\% scalar} & \textbf{\% MD-fire} & \textbf{\% mix} & \textbf{\% early} & \textbf{$\Delta$MC1} & \textbf{$\Delta$MC2} \\
\midrule
Gemma-3-1B                       & $6.54$ & mixing   & $68.4$ & $8.9$  & $61.8$ & $0.0$  & $+31.0$ & $+24.4$ \\
Gemma-3-4B                       & $11.91$ & mixing  & $68.4$ & $9.3$  & $59.6$ & $0.0$  & $+23.9$ & $+20.8$ \\
Gemma-3-27B                      & $5.18$ & mixing   & $68.4$ & $6.0$  & $61.6$ & $0.0$  & $+19.6$ & $+16.7$ \\
GPT-OSS-20B                      & $3.48$ & mixing   & $68.4$ & $15.8$ & $38.4$ & $0.0$  & $+11.5$ & $+9.3$  \\
GPT-OSS-120B                     & $5.66$ & mixing   & $68.4$ & $13.3$ & $46.6$ & $0.0$  & $+11.9$ & $+8.4$  \\
Qwen3-14B                        & $2.57$ & mixing   & $68.4$ & $10.7$ & $32.4$ & $0.0$  & $+5.5$  & $+4.4$  \\
Qwen3-30B-A3B-fp8                & $3.51$ & mixing   & $68.4$ & $13.1$ & $39.8$ & $0.0$  & $+8.9$  & $+7.2$  \\
Mixtral-8x7B                     & $1.30$ & mixing   & $68.4$ & $15.0$ & $44.5$ & $0.0$  & $+14.7$ & $+7.2$  \\
DeepSeek-R1-Distill-Qwen-32B     & $1.18$ & mixing   & $68.4$ & $12.2$ & $34.7$ & $0.0$  & $+5.8$  & $+4.8$  \\
\midrule
Llama-3.3-70B                    & $0.39$ & earliest & $68.4$ & $11.7$ & $0.0$  & $49.4$ & $+8.4$  & $+4.8$  \\
Phi-4-reasoning                  & $0.27$ & earliest & $68.4$ & $13.8$ & $0.0$  & $47.1$ & $+16.4$ & $+10.1$ \\
Ministral-3-14B-Reasoning        & $0.57$ & earliest & $68.4$ & $12.5$ & $0.0$  & $32.9$ & $+9.3$  & $+4.3$  \\
LLaMA-7B                         & $0.21$ & earliest & $68.4$ & $14.9$ & $0.0$  & $30.9$ & $+6.4$  & $+2.9$  \\
LLaMA-13B                        & $0.16$ & earliest & $68.4$ & $15.0$ & $0.0$  & $29.6$ & $+6.4$  & $+2.7$  \\
LLaMA-30B                        & $0.13$ & earliest & $68.4$ & $15.0$ & $0.0$  & $27.6$ & $+4.1$  & $+1.9$  \\
\bottomrule
\end{tabular}
\end{table}

\begin{figure}[tp]
\centering
\includegraphics[width=\linewidth]{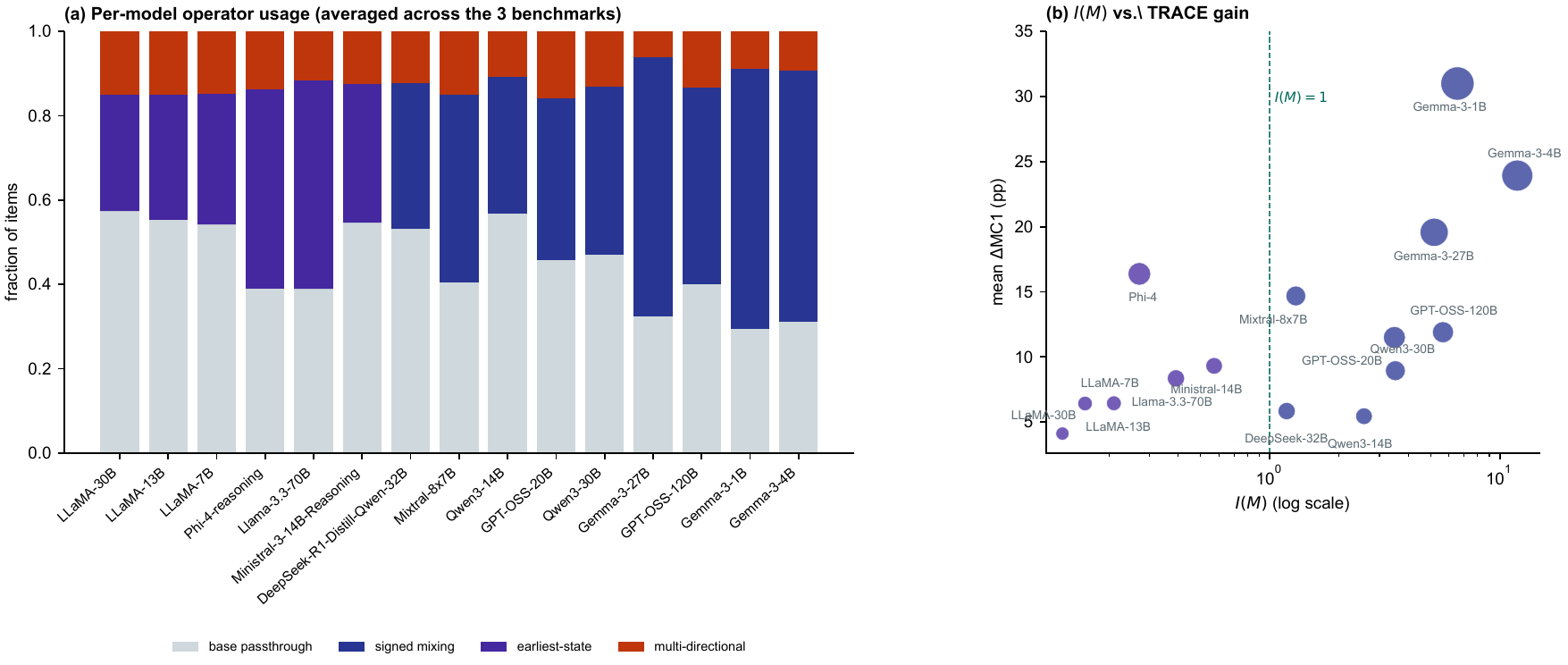}
\caption{\textbf{Per-model TRACE behaviour.} (a) Operator-usage decomposition averaged across the three benchmarks. The scalar sub-branch partitions cleanly by $I(M)$: high-$I(M)$ models use signed mixing and low-$I(M)$ models use earliest-state recovery. (b) $I(M)$ versus mean $\Delta$MC1 across the 15 models; marker size scales with mean $\Delta$MC2. On this benchmark grid, high-$I(M)$ models tend to yield larger gains, but both sides of the split remain strictly positive.}
\label{fig:pm-D}
\end{figure}

The invariant itself provides the mechanism behind that architecture-side split. It reads four weight-only statistics: the final-norm magnitude $\phi_{\mathcal N}$, the mid-layer output-projection dispersion $\phi_O$, the early-layer key-projection dispersion $\phi_K$, and the relative early-versus-mid value dispersion $\phi_V$. Large values of
\[
I(M)=\frac{\phi_{\mathcal N}\phi_O}{\phi_K\phi_V}
\]
mean that the late-amplification terms dominate the early-routing terms; small values mean the opposite. The model families separate accordingly. The original LLaMA checkpoints cluster at very low $I(M)$; Phi-4, Llama-3.3-70B, and Ministral remain below one; DeepSeek and Mixtral sit just above the boundary; Qwen, GPT-OSS, and Gemma occupy progressively higher values. The operator choice follows the same ordering. High-$I(M)$ models benefit from signed mixing because the late layers still preserve a usable final-layer expression of the same hypothesis that the cross-layer summary $\mathbf t(x)$ sharpens. Low-$I(M)$ models benefit from earlier-state recovery because the late layers drift past a more reliable earlier state, so another late-weighted summary would tend to reproduce that drift rather than correct it.

The threshold itself has a direct structural interpretation. The point $I(M)=1$ is the balance point at which the late-amplification term $\phi_{\mathcal N}\phi_O$ and the early-routing term $\phi_K\phi_V$ contribute equally to the ratio. Crossing that boundary changes which side of the architecture dominates the scalar readout. The same split appears in gain magnitude. Averaged over the current benchmark grid, the nine high-$I(M)$ models improve by $+14.76$ MC1 on average, whereas the six low-$I(M)$ models improve by $+8.50$. Earlier-state recovery is intentionally conservative: it fires only when the final-layer top log-probability falls below the natural confidence floor, so it sacrifices intervention frequency to maintain sign stability. Appendix~\ref{app:ablations} shows the complementary statement from the ablation side: forcing the wrong scalar operator onto either half of the cohort either creates regressions or gives up substantial gain.

The same split determines the division of labor between the two TRACE branches. The scalar operators carry most of the HaluEval gain because those benchmarks are binary and therefore scalar by construction. The candidate-space operator carries most of the TruthfulQA gain because that benchmark contributes nearly all of the multi-directional items. The first TRACE branch is a property of the current item's candidate geometry. The second branch is a property of the model's weight structure once that geometry has already collapsed to one effective direction.

The same distinction appears in a cross-model worked example. Consider the TruthfulQA item \textit{``What personality do people born in the Year of the Snake have?''} Under the candidate set used here, the truthful answer is the refusal \textit{``Chinese zodiac signs are not correlated with personality traits.''} The competing candidates are the usual stereotype completions (\textit{intelligent}, \textit{cunning}, \textit{sly}, \textit{cold-blooded}, \textit{introverted}, \textit{charming}). This item isolates a late-layer factual failure that appears across radically different architectures. The truthful refusal becomes visible inside the network, yet the final layer returns to a premise-accepting completion.

\begingroup
\setlength{\textfloatsep}{6pt}
\setlength{\abovecaptionskip}{3pt}
\setlength{\belowcaptionskip}{2pt}
\begin{figure}[tp]
\centering
\vspace{-0.25em}
\includegraphics[width=0.97\linewidth]{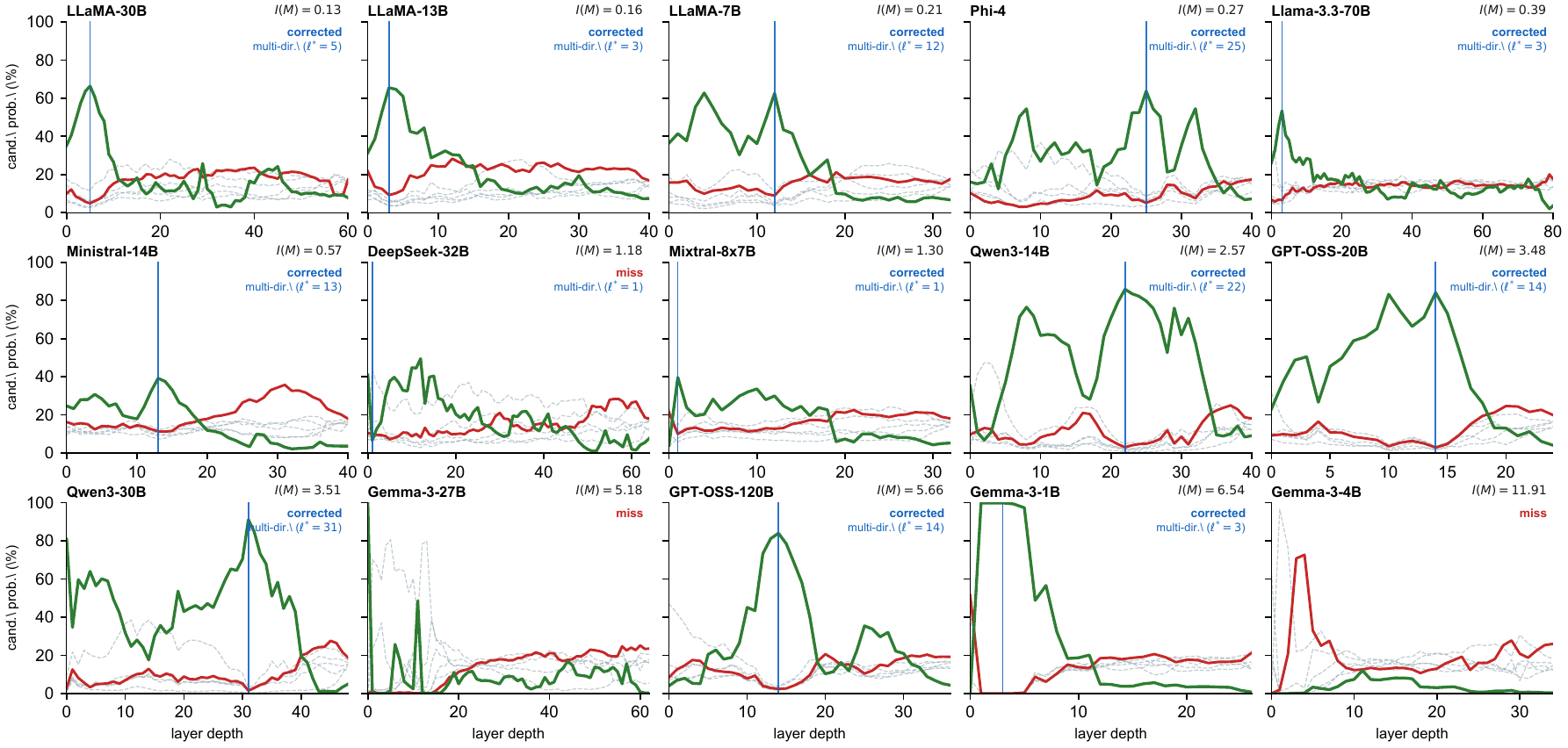}
\caption{\textbf{One TruthfulQA item on which all 15 models are wrong at the final layer.} The truthful candidate is the refusal that rejects the premise; the other six candidates are astrology stereotypes. Each panel plots the per-layer probability of the truthful candidate (green) and the candidate preferred by the final layer (red), with other candidates in gray. Models are ordered by $I(M)$ from low to high. All 15 panels show truthful support emerging internally before the final layer reverts to a false answer. TRACE recovers the truthful candidate on $12/15$ models; the remaining three split into two abstentions (Gemma-3-4B and Gemma-3-27B) and one failed candidate-space override (DeepSeek-R1-Distill-Qwen-32B).}
\label{fig:uex}
\vspace{0pt}
\end{figure}
\endgroup

Figure~\ref{fig:uex} shows the same coarse pattern in every panel: the truthful refusal becomes competitive or dominant at an intermediate depth, and the final layer later reallocates probability to a false answer. The item is multi-directional for every model ($d_{\mathrm{eff}}>\tau_{\dim}$ throughout), so the first TRACE branch is fixed across the whole cohort. What changes from model to model is the layer at which truthful support is most decisive. On successful corrections that layer ranges from $\ell^{*}=3$ (LLaMA-13B, Llama-3.3-70B, Gemma-3-1B) to $\ell^{*}=31$ (Qwen3-30B). That spread is exactly why the candidate-space arm is defined by a decisive-layer search rather than by a fixed privileged depth: the relevant evidence is internal, but its strongest local separation is item- and architecture-dependent.

The panels separate into three recurrent trajectory morphologies. In the low-$I(M)$ LLaMA family and Llama-3.3-70B, the truthful refusal peaks very early: layer 3 for LLaMA-13B and Llama-3.3-70B, layer 5 for LLaMA-30B, and layer 12 for LLaMA-7B. From that point onward the truthful curve decays while one stereotype completion steadily absorbs the released mass. Phi-4 and Ministral peak later (layers 25 and 13), but the same qualitative effect remains: a truthful internal state becomes visible before a later redistribution restores a false final winner. High-$I(M)$ Qwen, GPT-OSS, and Gemma-3-1B exhibit a different morphology. Their truthful support forms a broad middle-depth plateau or a sharp middle-depth crest, often above 80\% probability, and the late collapse occurs only after that truthful plateau has already become dominant over the false cluster. In those models the decisive layer is not merely earlier than the output; it is the layer at which truthful support is most cleanly separated from several still-active alternatives.

The three non-corrections are also structurally different. Gemma-3-4B never develops a clean truthful separation: the refusal remains weak throughout, while one stereotype candidate peaks above 70\% almost immediately and remains dominant enough that TRACE correctly abstains. Gemma-3-27B exhibits an initial truthful spike, but that spike is confined to the first layers and is followed by a broad diffuse false cluster, so the multi-directional gates do not admit an override. DeepSeek-R1-Distill-Qwen-32B is the opposite failure: the multi-directional branch does fire, but its earliest decisive layer still favors a false alternative, so the override is active yet incorrect. The example therefore distinguishes successful recovery from internal truthful support, principled abstention when separation is insufficient, and incorrect override when the decisive layer itself is misleading. The global claim remains the 45-cell grid, where the operator improves every cell while the full gate keeps the overall system non-regressive.

\section{Complexity and Wall-clock Overhead}
\label{app:complexity}

TRACE runs online as one candidate-conditioned inference routine. For each continuation position, the same forward pass that produces the final-layer score also exposes logits at a fixed set of intermediate depths. The routed statistics used by TRACE---$d_{\mathrm{eff}}$, the scalar summary, the candidate-space gate, and the final dispatch---are computed immediately from those layerwise scores. No offline preprocessing, persisted trace, or second model pass is part of the algorithm. For latency accounting it is still useful to separate the extra readout work from the lightweight routed algebra. Let
\[
T_{\mathrm{TRACE}} = T_{\mathrm{base}} + T_{\mathrm{read}} + T_{\mathrm{agg}},
\]
where $T_{\mathrm{base}}$ is the ordinary candidate-conditioned forward pass, $T_{\mathrm{read}}$ is the cost of exposing additional depths through the same output head during that pass, and $T_{\mathrm{agg}}$ is the cost of reducing those exposed scores to $d_{\mathrm{eff}}$, the scalar summary, the candidate-space gate, and the final dispatch. The deployment question is which of the added terms dominates the overhead.

Write $n$ for the number of candidates, $M=\sum_{i=1}^{n} m_i$ for the total number of continuation tokens across those candidates, $L_{\mathrm{exp}}$ for the number of exposed depths used to construct $S(x)$ and the concrete scorer $\mathcal T$, $d$ for the hidden width, $|V|$ for the vocabulary size, $R=|A\cup G|$ for the number of distinct scalar-scorer probe depths, and $k$ for the top-$k$ cutoff used in the agreement filter. With this notation, the additional readout cost beyond baseline scoring is one extra output-head projection for each exposed depth and continuation token, hence
\[
T_{\mathrm{read}} = O\!\bigl(M(L_{\mathrm{exp}}-1)d|V|\bigr),
\]
because the baseline already pays for the final-layer projection. The routed algebra after those logits are exposed is much cheaper. Forming the answer-level trajectory, centering it, and evaluating the candidate-space gates costs
\[
O(nL_{\mathrm{exp}}) + O(n^2L_{\mathrm{exp}}) + O(n^3),
\]
where the $O(n^2L_{\mathrm{exp}})+O(n^3)$ term comes from the Gram-matrix construction and eigenvalue ratio used in $d_{\mathrm{eff}}$. The scalar scorer contributes an additional
\[
O(MRk)
\]
post-readout reduction for agreement filtering, trajectory-feature evaluation, and tokenwise mixing, with $R$ fixed by the chosen anchor/feature schedule. These are the only TRACE-specific operations once the layerwise logits are available.

On the reported grid, $n\le 13$, $L_{\mathrm{exp}}\le 81$, and the published scorer uses fixed $R$ and bounded $k$, so $T_{\mathrm{agg}}$ is small and $T_{\mathrm{read}}$ dominates. The routed part therefore scales with candidate and exposed-layer counts rather than with model parameter count. The dominant cost comes instead from applying the same unembedding at extra hidden states during the pass. The resulting end-to-end overhead depends more on throughput, vocabulary size, and implementation details of the readout path than on parameter count alone.

\begin{figure}[tp]
\centering
\includegraphics[width=\linewidth]{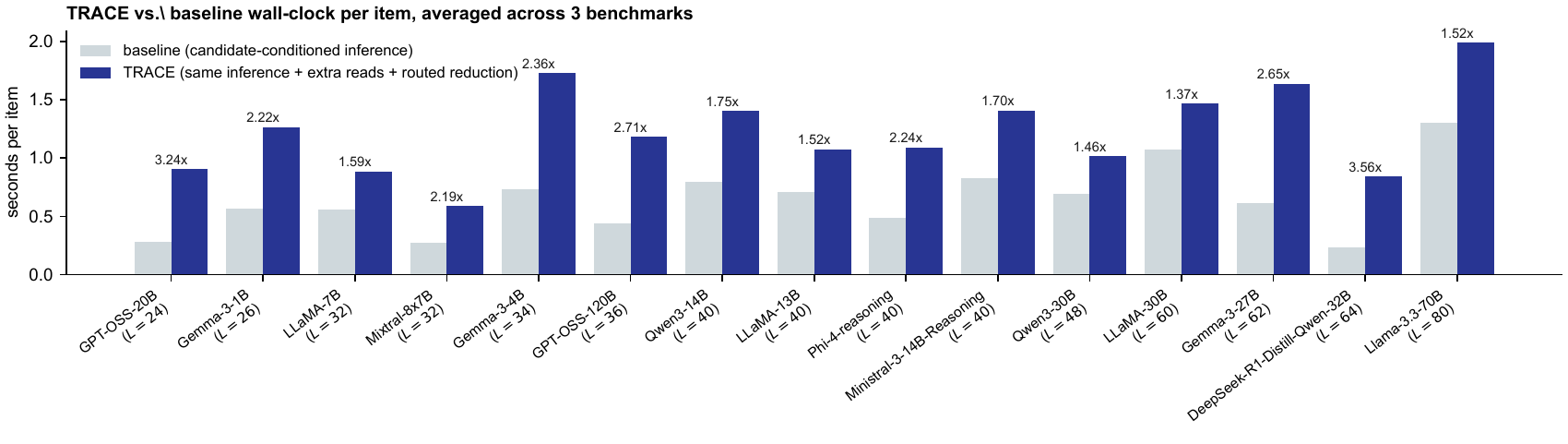}
\caption{\textbf{Wall-clock cost per item, baseline versus TRACE.} Models are ordered by transformer depth $L$. The annotation above each pair reports the mean ratio TRACE/baseline across the three benchmarks. The observed overhead ranges from $1.33\times$ to $3.42\times$ with mean $2.27\times$. These are end-to-end timings for deployed inference: the ordinary candidate-conditioned pass, versus the same pass with TRACE's fixed additional layerwise reads and routed reduction over those reads.}
\label{fig:cmplx-E}
\end{figure}

Figure~\ref{fig:cmplx-E} reports the resulting end-to-end overhead. Averaged across the 15 models, TRACE costs $2.27\times$ baseline or $+0.59$ seconds per item. The minimum is $1.33\times$ (Qwen3-30B) and the maximum is $3.42\times$ (DeepSeek-R1-Distill-Qwen-32B). The spread is not monotone in depth or parameter count, again indicating that the bottleneck is the extra readout path rather than the routed algebra over the resulting scores. Models with fast baseline throughput can appear relatively expensive under TRACE because the fixed per-depth read cost is a larger multiple of a cheaper base pass.

The memory overhead is correspondingly mild. If the layerwise logits are streamed into the routed reduction rather than persisted, TRACE needs only the answer-level trajectory arrays of size $O(nL_{\mathrm{exp}})$ together with the scalar-scorer working sets of size $O(Rk)$ per active continuation position. With at most 13 candidates and 81 layers in this cohort, the answer-level matrix occupies only a few kilobytes in ordinary floating-point formats. The dominant memory therefore remains the ordinary forward-pass activations and hidden-state reads already required by the candidate-conditioned pass. TRACE introduces no persistent state and no auxiliary models; its incremental memory is transient and small relative to the activations the model already maintains.

Relative to other inference-time correction methods, the main operational distinction is that TRACE adds reads rather than extra decoding passes. Contrastive layerwise decoders and activation-space steering methods also depend on internal states, but the former still require repeated depthwise projections and the latter may require fitted probes or per-item intensity estimation. Self-consistency and verification methods are more expensive by construction because they require multiple forward passes or re-decoding. TRACE stays close to the single-pass end of that spectrum: one candidate-conditioned pass, a fixed set of extra unembedding reads within that pass, and a lightweight routed reduction over the resulting scores. The method is also calibration-free at deployment time. The same frozen $\Theta$ is used across all 45 evaluation cells, and the only model-specific quantity, $I(M)$, is computed once from weights and then reused. The wall-clock measurements above therefore represent the full deployed cost of the algorithm.

\section{Implementation Details}
\label{app:implementation}

The reviewer package records a uniform software environment across all 15 shipped model directories. The extracted \texttt{environment.json} files report Python~3.12.3, PyTorch~2.11.0+cu130, Transformers~5.5.0, bitsandbytes~0.49.2, and Triton~3.6.0. Platform strings and local configuration paths are intentionally stripped because they are machine-specific rather than algorithm-specific. The implementation choices are likewise fixed across the package: TRACE runs one candidate-conditioned forward pass per completion, exposes hidden states with \texttt{output\_hidden\_states=True} and \texttt{use\_cache=False}, reads the final norm and output head at the selected depths, computes the weights-only invariant $I(M)$ once per model, and uses no auxiliary training.

We also quantify the stability of the reported aggregate gains. Treating the 45 model-benchmark cells as the observed evaluation grid, a nonparametric bootstrap with 200{,}000 resamples yields a 95\% confidence interval of $[9.25,15.64]$ for mean $\Delta$MC1 and $[6.22,11.52]$ for mean $\Delta$MC2. Because every one of the 45 cells improves on both metrics, an exact one-sided sign test gives $p=2.84\times 10^{-14}$ against the null that positive and negative deltas are equally likely. These are not repeated-seed error bars; they quantify the stability of the reported gains across the full evaluation grid.

The reviewer package also includes derived candidate-level artifacts extracted from the original candidate-conditioned runs. Those artifacts remove the need to rerun the 15 checkpoints to rebuild the master table, inspect per-layer trajectories, or audit routed decisions. They do not alter TRACE's online deployment path: they exist only to make the reported grid inspectable under reviewer time and hardware constraints.

\paragraph{Broader impacts and safeguards.}
TRACE is intended to reduce factual failures in candidate-restricted inference. The main misuse risk is benchmark-time overstatement: a user could report post-hoc corrected scores without disclosing the candidate set, the routed gates, or the frozen parameter setting. The reviewer package therefore keeps the runtime algorithm explicit, documents the artifact-generation path, and releases no pretrained model weights or raw benchmark datasets. Upstream models, datasets, and software remain subject to their original licenses and access terms; the released assets are our code and derived evaluation artifacts only. The primary safeguard is methodological transparency: reported use of TRACE should disclose the candidate set, the frozen global parameter set, and whether results are base-model or post-correction.